\documentclass{article}




    \usepackage[final]{}


\usepackage[utf8]{inputenc} 
\usepackage[T1]{fontenc}    
\usepackage{hyperref}       
\usepackage{url}            
\usepackage{booktabs}       
\usepackage{amsfonts}       
\usepackage{nicefrac}       
\usepackage{microtype}      
\usepackage{xcolor}         
\usepackage{etoc}

\usepackage{selectp}

\usepackage{amsmath}
\usepackage{mathtools}
\usepackage{amsthm}
\usepackage{graphicx}
\usepackage{wrapfig}
\usepackage{xcolor}
\usepackage{algorithm, caption}
\usepackage[noend]{algorithmic}
\usepackage{multirow}
\usepackage{xspace}
\usepackage{subcaption}
\usepackage{enumitem}
\usepackage{nicefrac}

\usepackage{bm}
\usepackage[capitalize,noabbrev]{cleveref}

\allowdisplaybreaks


\setlength{\parskip}{1.5mm}

\allowdisplaybreaks

\setlength{\abovedisplayskip}{0pt}
\setlength{\belowdisplayskip}{0pt}
\setlength{\abovedisplayshortskip}{0pt}
\setlength{\belowdisplayshortskip}{0pt}

\makeatletter
\renewcommand{\paragraph}{%
  \@startsection{paragraph}{4}%
  {\z@}{0ex \@plus 0ex \@minus 0ex}{-1em}%
  {\normalfont\normalsize\bfseries}
}
\makeatother

\newcommand{\ourframework}{{\textsc{TaCoS}}\xspace}
\newcommand{\ouralgorithm}{{\textsc{SAC-TaCoS}}\xspace}

\newcommand{\ouralgorithmmodelbased}{{\textsc{OTaCoS}}\xspace}

\newtheorem{theorem}{Theorem}
\newtheorem{lemma}[theorem]{Lemma}
\newtheorem{definition}{Definition}
\newtheorem{assumption}{Assumption}
\newtheorem{proposition}[theorem]{Proposition}
\newtheorem{corollary}[theorem]{Corollary}

\DeclarePairedDelimiter\parentheses{(}{)}

\usepackage{xifthen}

\newcommand{\likelihood}[2][]{%
    \ifthenelse{\isempty{#1}}
        {p\left(#2\right)}
        {p_{#1}\left(#2\right)}%
}

\newcommand{\normal}[3][]{%
    \ifthenelse{\isempty{#1}}
        {\mathcal{N}\left(#2, #3\right)}
        {\mathcal{N}\left(#1|#2, #3\right)}%
}

\newcommand{\defeq}{\overset{\text{def}}{=}}


\newcommand{\norm}[1]{\left\lVert#1\right\rVert}
\newcommand{\abs}[1]{\left\lvert#1\right\rvert}

\newcommand{\set}[1]{\left\{#1\right\}}

\DeclareMathOperator*{\argmax}{arg\!\max}

\RenewDocumentCommand{\H}{mo}{\mathrm{H}\IfValueTF{#2}{\left[#1\ \middle|\ #2\right]}{\parentheses*{#1}}}
\NewDocumentCommand{\Hsm}{mo}{\mathrm{H}\IfValueTF{#2}{[#1 \mid #2]}{\parentheses{#1}}}
\NewDocumentCommand{\I}{mmo}{\mathrm{I}\IfValueTF{#3}{\!\left(#1;#2\ \middle|\ #3\right)}{\parentheses*{#1; #2}}}
\NewDocumentCommand{\Ism}{mmo}{\mathrm{I}\IfValueTF{#3}{(#1;#2 \mid #3)}{\parentheses{#1; #2}}}

\newcommand{\R}{\mathbb{R}}
\newcommand{\Rzero}{\mathbb{R}_{\geq 0}}
\newcommand{\N}{\mathbb{N}}

\newcommand{\E}{\mathbb{E}}
\newcommand{\Prob}{\mathbb{P}}
\newcommand{\subGaussian}[1]{\text{subG}\left( #1 \right)}


\def\mB{{\bm{B}}}

\def\mI{{\bm{I}}}

\def\mPhi{{\bm{\Phi}}}

\def\mXi{{\bm{\Xi}}}
\def\mPsi{{\bm{\Psi}}}


\def\vmu{{\bm{\mu}}}

\def\ve{{\bm{e}}}
\def\vf{{\bm{f}}}
\def\vg{{\bm{g}}}

\def\vs{{\bm{s}}}

\def\vu{{\bm{u}}}

\def\vw{{\bm{w}}}
\def\vx{{\bm{x}}}

\def\vy{{\bm{y}}}
\def\vz{{\bm{z}}}
\def\vpi{{\bm{\pi}}}
\def\vmu{{\bm{\mu}}}
\def\vsigma{{\bm{\sigma}}}


\def\setD{{\mathcal{D}}}
\def\setE{{\mathcal{E}}}

\def\setI{{\mathcal{I}}}

\def\setM{{\mathcal{M}}}
\def\setN{{\mathcal{N}}}

\def\setT{{\mathcal{T}}}
\def\setU{{\mathcal{U}}}

\def\setX{{\mathcal{X}}}

\def\setZ{{\mathcal{Z}}}

\title{When to Sense and Control? \\
A Time-adaptive Approach for Continuous-Time RL}


%

\author{Lenart Treven\thanks{Correspondence to \texttt{lenart.treven@inf.ethz.ch}}, Bhavya Sukhija, Yarden As, Florian Dörfler, Andreas Krause \\
ETH Zurich, Switzerland
}

\begin{document}

\etocdepthtag.toc{mtchapter}
\etocsettagdepth{mtchapter}{subsection}
\etocsettagdepth{mtappendix}{none}

\maketitle

\begin{abstract}
\looseness -1
Reinforcement learning (RL) excels in optimizing policies for discrete-time Markov decision processes (MDP). However, various systems are inherently continuous in time, making discrete-time MDPs an inexact modeling choice. 
In many applications, such as greenhouse control or medical treatments, each interaction (measurement or switching of action) involves manual intervention and thus is inherently costly. Therefore, 
we generally prefer a time-adaptive approach with fewer interactions with the system.
In this work, we formalize an RL framework, 
\emph{\textbf{T}ime-\textbf{a}daptive \textbf{Co}ntrol \& \textbf{S}ensing} (\textbf{\ourframework}), that tackles this challenge by optimizing over policies that besides control predict the duration of its application. Our formulation results in an extended MDP that any standard RL algorithm can solve.
We demonstrate that state-of-the-art RL algorithms trained on \ourframework drastically reduce the interaction amount over their discrete-time counterpart while retaining the same or improved performance, and exhibiting robustness over discretization frequency.
Finally, we propose \ouralgorithmmodelbased, an efficient model-based algorithm for our setting. We show that \ouralgorithmmodelbased enjoys sublinear regret for systems with sufficiently smooth dynamics and empirically results in further sample-efficiency gains.
\end{abstract}

\section{Introduction}
\label{section: Introduction}

Nearly all state-of-the-art RL algorithms \citep{schulman2017proximal,haarnoja2018soft,lillicrap2015continuous,schulman2015trust} were developed for discrete-time MDPs.
Nevertheless, continuous-time systems are ubiquitous in nature, ranging from robotics, biology, medicine, environment and sustainability etc. \citep[cf.][]{spong2006robot,jones2009differential,lenhart2007optimal,panetta2003optimal,turchetta2022learning}. 
Such systems can be naturally modeled with stochastic differential equations (SDEs), but computational approaches necessitate discretization.  Furthermore, in many applications, obtaining measurements and switching actions is expensive. For instance, consider a greenhouse of fruits or medical treatment recommendations. In both cases, each measurement (crop inspection, medical exam) or switching of actions (climate control, treatment adjustment) typically involves costly human intervention. Hence, minimizing such interactions with the underlying system is desirable. This underlying challenge is rarely addressed in the RL literature.

In practice, a time-equidistant discretization frequency is set, often manually, adjusted to the underlying system's characteristic time scale.
This is challenging, however, especially for unknown/uncertain systems, and systems with multiple dominant time scales \citep{engquist2007heterogeneous}.
Therefore, for many real-world applications having a global frequency of control is inadequate and wasteful. For example, in medicine, patient monitoring often requires higher frequency interaction during the onset of illness and
lower frequency interactions as the patient recovers~\citep{kaandorp2007optimal}.

\looseness -1 In this work, we address this limitation of standard RL methods and propose a novel 
RL framework, \textbf{T}ime-\textbf{a}daptive \textbf{Co}ntrol \& \textbf{S}ensing (\textbf{\ourframework}). \ourframework reduces a general continuous-time RL problem with underlying SDE dynamics to an equivalent discrete-time MDP, that can be solved with any RL algorithm, including standard policy gradient methods like PPO and SAC \citep{schulman2017proximal, haarnoja2018soft}. We summarize our contributions below.

\paragraph{Contributions}
\begin{enumerate}[leftmargin=0.5cm]
    \item We reformulate the problem of time-adaptive continuous time RL to an equaivalent discrete-time MDP that can be solved with standard RL algorithms.
    \item Using our formulation, we extend standard policy gradient techniques (\cite{haarnoja2018soft} and \cite{schulman2017proximal}) to the time-adaptive setting.
    Our empirical results on standard RL benchmarks \citep{brax2021github} show that \ourframework outperforms its discrete-time counterpart in terms of policy performance, computational cost, and sample efficiency.\looseness -1
    \item To further improve sample efficiency, we propose a model-based RL algorithm, \ouralgorithmmodelbased. \ouralgorithmmodelbased uses well-calibrated probabilistic models to capture epistemic uncertainty and, similar to \cite{curi2020efficient} and \cite{treven2023efficient}, leverages the principle of optimism in the face of uncertainty to guide exploration during learning.
    We theoretically prove that \ouralgorithmmodelbased suffers no regret and empirically demonstrate its sample efficiency.
\end{enumerate}

\section{Problem statement}
\label{section: Problem statement}
We consider a general nonlinear continuous time dynamical system with continuous state $\setX \subset \R^{d_{\vx}}$ and action $\setU \subset \R^{d_{\vu}}$ space. The underlying dynamics are governed by a (controllable) SDE:
\begin{align}
\label{eq: Driving SDE}
    d\vx_t = \vf^{*}(\vx_t, \vu_t)dt + \vg^{*}(\vx_t, \vu_t)d\mB_t.
\end{align}
\looseness-1
Here $\vx_t \in \setX$ is the state at time $t$,
$\vu_t \in \setU$ the control input, $\vf^*, \vg^*$ are unknown drift and diffusion functions and $\mB_t$ is a standard Brownian motion in $\R^{d_{\mB}}$.
Our goal is to find a control policy $\vpi_{\setU}: \setX \times \setT \to \setU$ which maximizes an unknown reward $b^*(\vx_t, \vu_t)$ over a fixed horizon $\setT \defeq [0, T]$, i.e.,
\begin{equation*}
    \max_{\vpi \in \Pi} \E\left[ \int_{t \in \setT}b^*(\vx_t, \vpi_{\setU}(\vx_t, t))dt \right],
\end{equation*}
where the expectation is taken w.r.t.~the policy and stochastic dynamics and $\Pi$ is the class of policies\footnote{We assume that $\Pi$ is the set of $L_{\vpi}$-Lipschitz policies} over which we search. 

In practice, we can only measure the system state and execute control policies in discrete points in time. In this work, we focus on problems where state measurement and control are synchronized in time. We refer to these synchronized time points as \emph{interactions} in the following parts of this paper. Synchronizing state measurement and control contrasts standard time-adaptive approaches such as event-triggered control \citep{heemels2021event}, where the state is measured arbitrarily high frequency and control inputs are changed only so often to ensure stability. It is also in contrast to the complementary setting, where control inputs are changing at an arbitrarily high frequency but measurements are collected adaptively in time \citep{treven2023efficient}. An adaptive control approach as~\citet{heemels2021event} is very important for many real-world applications but similarly, an adaptive measurement strategy is crucial for efficient learning in RL~\citep{treven2023efficient}. Our approach treats both of these requirements jointly. 

We consider two different scenarios for continuous-time control:\;(\emph{i}) Penalizing interactions with some cost, (\emph{ii}) bounded number of interactions, i.e., hard constraint on control/measurement steps. \looseness-1

\paragraph{Interaction cost} \looseness -1 We 
consider the setting where every interaction we take has an inherent cost $c(\vx_t, \vu_t) > 0$. Note that we consider this cost structure for its simplicity and
\ourframework works for more general cost functions that depend on the duration of application for the action $\vu_t$ or the previous action $\vu_{t-1}$ and thus captures many practical real-world settings.
We define this task more formally below
\begin{align}
    \label{eq:transition cost setting}
    &\max_{\vpi \in \Pi, \pi_{\setT}} \E\left[ \sum^{K-1}_{i=0}\int^{t_i}_{t_{i-1}}b^*(\vx_t, \vpi_{\setU}(\vx_{t_{i-1}}, t_{i-1}))dt - c(\vx_{t_{i-1}}, \vpi_{\setU}(\vx_{t_{i-1}}, t_{i-1}))\right], \\ 
    &t_i = \pi_{\setT}(\vx_{t_{i-1}}, t_{i-1}) + t_{i-1}, \; t_0 = 0, t_K = T, \; \forall (\vx, t) \in \setX \times \setT;  \pi_{\setT}(\vx, t) \in [t_{\min}, t_{\max}] \notag.
\end{align}
Here $t_{\min} > 0$ is the minimal duration for which we have to apply the control, $t_{\max} \in [t_{\min}, T]$ the maximum duration, and $\pi_{\setT}$ is a policy that predicts the duration of applying the action.

\paragraph{Bounded number of interactions} In this setting, the number of interactions with the system is limited by a known amount $K$. Intuitively, this represents a scenario where we have a finite budget for the inputs that we can apply and have to decide on the best strategy to space these $K$ inputs over the full horizon. 
A formal definition of this task is given below
\begin{align}
    \label{eq:bounded cost setting}
    &\max_{\vpi \in \Pi, \pi_{\setT}} \E\left[ \sum^{K-1}_{i=0}\int^{t_i}_{t_{i-1}}b^*(\vx_t, \vpi_{\setU}(\vx_{t_{i-1}}, t_{i-1}, i-1))dt \right], \\ 
    &t_i = \pi_{\setT}(\vx_{t_{i-1}}, t_{i-1}, i-1) + t_{i-1}, \; t_0 = 0, t_K = T, \forall (\vx, t, i): \pi_{\setT}(\vx, t, i) \in [t_{\min}, t_{\max}] \notag.
\end{align}

\looseness=-1
In the absence of the transition costs or the bound on the number of interactions, intuitively the policy would propose to interact with the system as frequently as possible, i.e., every $t_{\min}$ seconds. The additional costs/constraints ensure that we do not converge to this trivial (but unrealistic) solution.

\looseness=-1

\section{\ourframework: Time Adaptive Control or Sensing}
\label{section: Reformulation to MDPs}
{\looseness -1 }In the following, we reformulate the continuous-time problem as an equivalent discrete-time MDP. We first denote the state and running reward flows of \Cref{eq: Driving SDE}.
The state flow by applying action $\vu_k$ for $t_k$ time reads:
\begin{align*}
     \vx_{k+1} &= \mXi(\vx_k, \vu_k, t_k), \\
      \mXi(\vx, \vu, t) &\defeq \vx + \int_{0}^t\vf^{*}(\vx_s, \vu)ds + \int_{0}^t\vg^{*}(\vx_s, \vu)d\mB_s.
\end{align*}
We assume that every time we interact with the system, we also obtain the integrated reward and define the reward flow as
\begin{equation}
\label{eq: definition of integrated reward}
    \begin{aligned}
        \Xi_{b^*}(\vx, \vu, t) &\defeq \int_0^t b^*
        \left(\mXi(\vx, \vu, s), \vu\right)ds \\
    \end{aligned}.
\end{equation}
Due to the stochasticity of $(\mB_t)_{t \in \setT}$,  the state flow $\mXi(\vx, \vu, t)$ and the reward flow  $\Xi_{b^*}(\vx, \vu, t)$ are stochastic. For ease of notation, we denote 
\begin{align*}
    \mPhi_{\vf^*}(\vx_k, \vu_k, t_k) \defeq \E\left[\mXi(\vx_k, \vu_k, t_k)\right], \quad \Phi_{b^*}(\vx_k, \vu_k, t_k) \defeq \E\left[\Xi_{b^*}(\vx_k, \vu_k, t_k)\right] \\
    \vw_k^{\vx} \defeq \mXi(\vx_k, \vu_k, t_k) - \mPhi(\vx_k, \vu_k, t_k), \quad
    w_k^{b^*} \defeq \Xi_{b^*}(\vx_k, \vu_k, t_k) - \Phi_{b^*}(\vx_k, \vu_k, t_k),
\end{align*}
and the concatenated state and reward flow function, and noise as:
\begin{align}
\label{eq: definition of learning function}
        \mPhi^*(\vx_k, \vu_k, t_k) &= 
    \begin{pmatrix}
        \mPhi_{\vf^*}(\vx_k, \vu_k, t_k) \\
        \Phi_{b^*}(\vx_k, \vu_k, t_k)
    \end{pmatrix},\quad
    \vw_k =     
    \begin{pmatrix}
        \vw_k^{\vx} \\
        w_k^{b^*}
    \end{pmatrix}.
\end{align}
In this work, we search for policies that return the next control we apply and also the time for how long to apply the control. 


\subsection{Reforumlation of Interaction Cost setting to Discrete-time MDPs}
\label{subsection: Transition Cost}
\begin{figure}
\begin{subfigure}[b]{\textwidth}
    \centering\includegraphics[width=\linewidth]{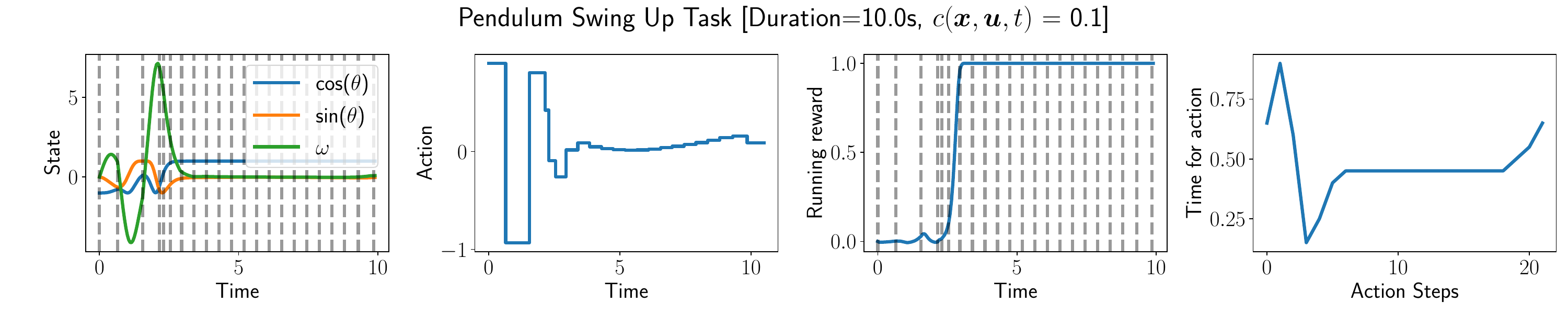}
    \caption{
     We add a constant switch cost of 0.1 and significantly reduce the number of interactions from 200 to 24. 
    Initially, the policy applies maximal bang-bang torque for longer times, until the pendulum reaches the top. On the top, we measure and change the controller at a higher frequency in order to keep the pendulum stable, at the position with the highest reward.}
    \label{fig: Pendulum Transition Cost}
\end{subfigure}
\begin{subfigure}[b]{\textwidth}
    \centering
    \includegraphics[width=\linewidth]{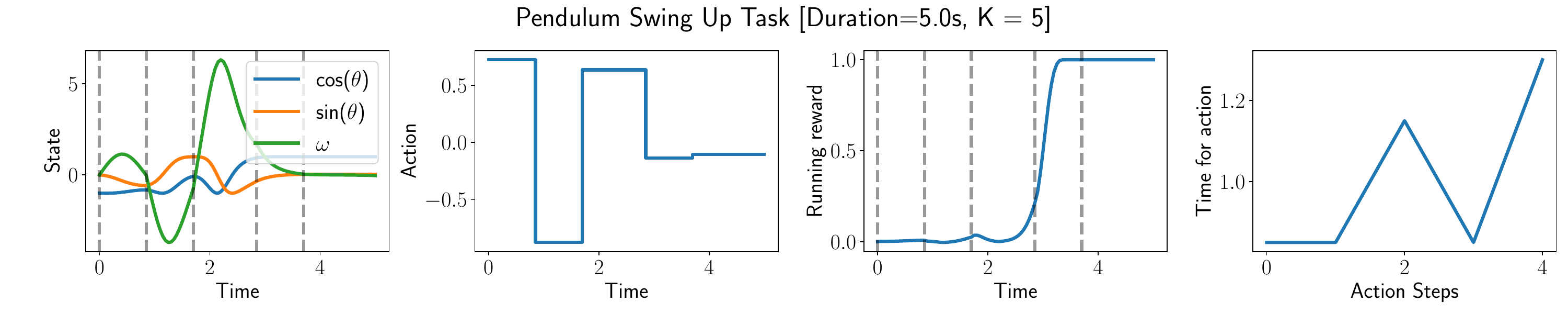}
    \caption{We set a tight bound of $K = 5$ for the number of interactions and observe that we can still solve the task.}
    \label{fig: Pendulum Bounded Number of Transitions}
\end{subfigure}
\caption{Experiment on the Pendulum environment for the average cost and a bounded number of switches setting. }
\label{fig: pendulum study}
\end{figure}
 We convert the problem with interaction costs to a standard MDP which any RL algorithm for continuous state-action spaces can solve. To this end,
we restrict ourselves to a policy class:
\begin{align*} 
    &\Pi_{IC} = \set{\vpi: \setX \times \setT \to \setU \times \setT \mid \pi_{\setT}(\cdot, t) \in [t_{\min}, t_{\max}], \vpi \text{ is } L_{\vpi}-\text{Lipschitz}}.
\end{align*}
For simplicity, we denote by $\pi_{\setT}$ the component of the policy that predicts the duration of applying the action and with $\vpi_{\setU}$ the component that predicts the action value. The policies we consider map state $\vx$ and time-to-go $t$ to control $\vu$ and the time $\tau$ for how long we apply the action $\vu$.
We define the augmented state $\vs = (\vx, b, t)$, where $\vx$ is the state, $b$ integrated reward and $t$ time-to-go.
With the introduced notation we arrive at a discrete-time MDP problem formulation
\begin{align}
\label{eq: transition cost reforumalted}
    &\max_{\vpi \in \Pi_{IC}}V_{\vpi, \mPhi^*}(\vx_0, T)= \max_{\vpi \in \Pi_{IC}}\,\, \E\left[\sum_{k=0}^{K-1}r(\vs_k, \vpi(\vs_k))\right] \\
    &\text{s.t.} \quad \vs_{k+1} =  \mPsi_{\mPhi^*}(\vs_k, \vpi(\vs_k), \vw_k),  \; \vs_0 =
    (\vx_0, 0, T),
    \quad \sum_{k=0}^{K-1}\pi_{\setT}(\vx_k, t_k) = T, \notag 
\end{align}
where we have
\begin{align*}
    \mPsi_{\mPhi^*}(\vs_k, \vpi(\vs_k), \vw_k) &= \left(\mPhi^*(\vx_k, \vpi(\vx_k, t_k)) + \vw_k ,  t_k - \pi_{\setT}(\vx_k, t_k)\right)\\
    r(\vs_k, \vpi(\vs_k)) &= \Xi_{b^*}(\vx_k,\vpi(\vx_k, t_k)) - c(\vx_k, \vpi_{\setU}(\vx_k, t_k)).
\end{align*}

\subsection{Reformulation of Bounded Number of Interactions to Discrete-time MDPs}
\label{subsection: Bounded Number of Transitions}
The second setting is similar to the one studied by \citet{ni2022continuous}. In this case, we consider the following class of policies:
\begin{align*}
    \Pi_{BI} = \set{\vpi: \setX \times \setT \times \N \to \setU \times \setT \mid \forall k \in [K]: \vpi(\cdot, \cdot, k) \text{ is $L_{\vpi}$ -- Lipschitz}}.
\end{align*}
For an augmented state $\vs = (\vx, b, t, k)$, our policies map states $\vx$, time-to-go $t$, number of past interactions $k$ to a controller $\vu$ and the time duration $\tau$ for applying the action. Here the optimal control problem reads
\begin{align}
\label{eq: bounded switches reforumalted}
    &\max_{\vpi \in \Pi_{BI}}V_{\vpi, \mPhi^*}(\vx_0, T)= \max_{\vpi \in \Pi_{BI}}\,\, \E\left[\sum_{k=0}^{K-1}r(\vs_k, \vpi(\vs_k))\right] \\
    &\text{s.t.} \quad \vs_{k+1} =  \mPsi_{\mPhi^*}(\vs_k, \vpi(\vs_k), \vw_k), \; \vs_0 =
    (\vx_0, 0, T, 0), \notag
\end{align}
where,
\begin{align*}
    \mPsi_{\mPhi^*}(\vs_k, \vpi(\vs_k), \vw_k) &= 
    \left(\mPhi^*(\vx_k, \vpi(\vx_k, t_k, k)) + \vw_k ,  t_k - \pi_{\setT}(\vx_k, t_k, k), k + 1\right) \\
    r(\vs_k, \vpi(\vs_k)) &= \Xi_{b^*}(\vx_k, \vpi(\vx_k, t_k, k)).
\end{align*}

In the following, we provide a simple proposition which shows that our reformulated problem is equivalent to its continuous-time counterpart from \cref{section: Problem statement}.
\begin{proposition}
The problem in \cref{eq:transition cost setting} and \ref{eq:bounded cost setting} are equivalent to \cref{eq: transition cost reforumalted} and \ref{eq: bounded switches reforumalted}, respectively.
\end{proposition}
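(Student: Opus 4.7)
The plan is to show that each admissible policy in the continuous-time formulation induces, via a natural augmentation, an admissible policy in the reformulated MDP with the same expected return, and conversely; hence the two suprema coincide. I would treat the interaction-cost and bounded-interactions cases in parallel, since the argument is identical apart from the extra counter component in the augmented state.

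First I would unpack the trajectory produced by \cref{eq: Driving SDE} under a piecewise-constant control. Fix a pair $(\vpi, \pi_{\setT})$ and let $t_0 = 0 < t_1 < \dots < t_K = T$ denote the interaction times. Over each interval $[t_{i-1}, t_i)$ the action $\vu_i = \vpi(\vx_{t_{i-1}})$ is applied, so by the definition of the state flow $\vx_{t_i} = \mXi(\vx_{t_{i-1}}, \vu_i, t_i - t_{i-1})$, which decomposes as $\mPhi_{\vf^*}(\vx_{t_{i-1}}, \vu_i, t_i - t_{i-1}) + \vw_i^{\vx}$ by \cref{eq: definition of learning function}. Similarly, the integrated reward over the interval equals $\Xi_{b^*}(\vx_{t_{i-1}}, \vu_i, t_i - t_{i-1}) = \Phi_{b^*}(\vx_{t_{i-1}}, \vu_i, t_i - t_{i-1}) + w_i^{b^*}$. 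Summing these stage contributions recovers the bracketed expression appearing in \cref{eq:transition cost setting,eq:bounded cost setting}.

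Next I would introduce the augmented state $\vs_k = (\vx_k, b_k, t_k)$ (adding the counter $k$ in the bounded case) and verify that the recursion $\vs_{k+1} = \mPsi_{\mPhi^*}(\vs_k, \vpi(\vs_k), \vw_k)$ reproduces this dynamics component-wise: the first block propagates the SDE state, the second accumulates integrated reward, and the third decrements the time-to-go, with the counter advancing in the bounded case. Under this identification, the stage reward $r(\vs_k, \vpi(\vs_k))$ coincides exactly with the $i$-th bracketed contribution in the continuous-time objective, and the constraint $\sum_k \pi_{\setT}(\vs_k) = T$ encodes $t_K = T$. Because the noises $\vw_k$ are defined as the centered state/reward flow increments, taking expectations yields equality of the two objectives for any fixed policy.

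It remains to match the policy classes. The inclusion in one direction is immediate: any $(\vpi, \pi_{\setT})$ from \cref{eq:transition cost setting} or \cref{eq:bounded cost setting} lifts to $\Pi_{TC}$ (resp. $\Pi_{BT}$) by letting the lifted policy ignore the time-to-go and counter arguments. For the converse, any element of $\Pi_{TC}$ (resp. $\Pi_{BT}$) that respects the horizon constraint defines a valid closed-loop strategy for the original problem with identical expected return. The main obstacle I anticipate is preserving Lipschitz regularity under this lifting, since the original class $\Pi$ requires $L_{\vpi}$-Lipschitzness in $\vx$ only while $\Pi_{TC}$ demands joint Lipschitzness in $(\vx, t)$; choosing the product metric on the augmented state and taking the lifted policy to be constant in its extra arguments handles this cleanly, after which the value-preserving bijection between admissible policies yields equality of the two suprema.
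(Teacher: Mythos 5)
The paper states this proposition without any written proof (it is presented as ``a simple proposition'' and treated as immediate by construction), and your argument is exactly the intended one: unroll the SDE under piecewise-constant control, identify each stage with the state/reward flow decomposition $\mXi = \mPhi_{\vf^*} + \vw^{\vx}$, $\Xi_{b^*} = \Phi_{b^*} + w^{b^*}$, match stage rewards and the horizon constraint, and check the policy classes. Your proposal is correct and essentially coincides with the paper's (implicit) reasoning; the only point deserving care --- that the original formulation uses state-feedback $\vpi(\vx)$, $\pi_{\setT}(\vx)$ while $\Pi_{TC}$, $\Pi_{BT}$ admit dependence on time-to-go (and the counter), a looseness already present in the paper since the constraint $t_K = T$ cannot generally be met by pure state feedback --- is one you identify and resolve in the natural way.
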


\Cref{fig: pendulum study} depicts the influence of interaction cost and $K$ on the controller's performance for the pendulum environment.
\section{\ourframework with Model-free RL Algorithms}
\label{section: Experiments}

We now illustrate the performance of \ourframework on several well-studied robotic RL tasks. We consider the RC car \citep{kabzan2020amz}, Greenhouse \citep{tap2000economics}, Pendulum, Reacher, Halfcheetah and Humanoid environments from Brax \citep{brax2021github}. Thus our experiments range from environments necessitating time-adaptive control like the Greenhouse, a realistic and highly dynamic race car simulation, and a very high dimensional RL task like the Humanoid.\footnote{$\setX \subset \R^{244}, \setU \subset \R^{17}$. We provide our implementation at \url{https://github.com/lasgroup/TaCoS}.}

We investigate both the bounded number of interactions and interaction cost settings in our experiments. In particular, we study how the bound $K$ affects the performance of \ourframework and compare it to the standard equidistant baseline. We further study the interplay between the stochasticity of the environments (magnitude of $\vg^*$) and interaction costs and the influence of $t_{\text{min}}$on \ourframework. For all experiments in this section, we combine \textsc{SAC} with \ourframework (\ouralgorithm).

\paragraph{How does the bound on the number of interactions $K$ affect \ourframework?}
\begin{figure}[ht]
    \centering
    \includegraphics[width=\linewidth]{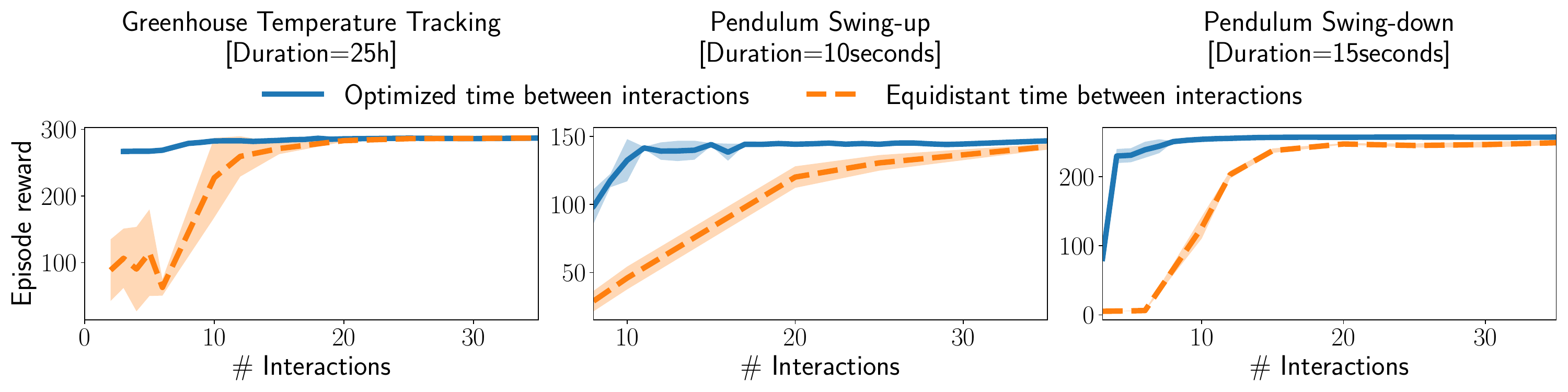}
    \caption{
    We study the effects of the bound on interactions $K$ on the performance of the agent. \ourframework performs significantly better than equidistant discretization, especially for small values of $K$.}
    \label{fig: reward vs number of applied actions}
\end{figure}
We analyze the bounded number of interactions setting (cf.~\Cref{subsection: Bounded Number of Transitions}) of \ourframework, studying the relationship between the number of interactions and the achieved episode reward.
We compare our algorithm with the standard equidistant time  discretization approach which splits the whole horizon $T$ into $T/K$ discrete time steps at which an interaction takes place. We evaluate the two methods in the greenhouse and pendulum environments. For the pendulum, we consider the swing-up and swing-down tasks. The results are reported in \cref{fig: reward vs number of applied actions}. The time-adaptive approach performs significantly better than the standard equidistant time discretization. This is particularly the case for the greenhouse and pendulum swing-down tasks. Both tasks involve driving the system to a stable equilibrium and thus, while high-frequency interaction might be necessary at the initial stages, a fairly low interaction frequency can be maintained when the system has reached the equilibrium state. This demonstrates the practical benefits of time-adaptive control. 
\paragraph{How does the interaction cost magnitude influence \ourframework?} 
\begin{figure}[ht]
    \centering
    \includegraphics[width=\linewidth]{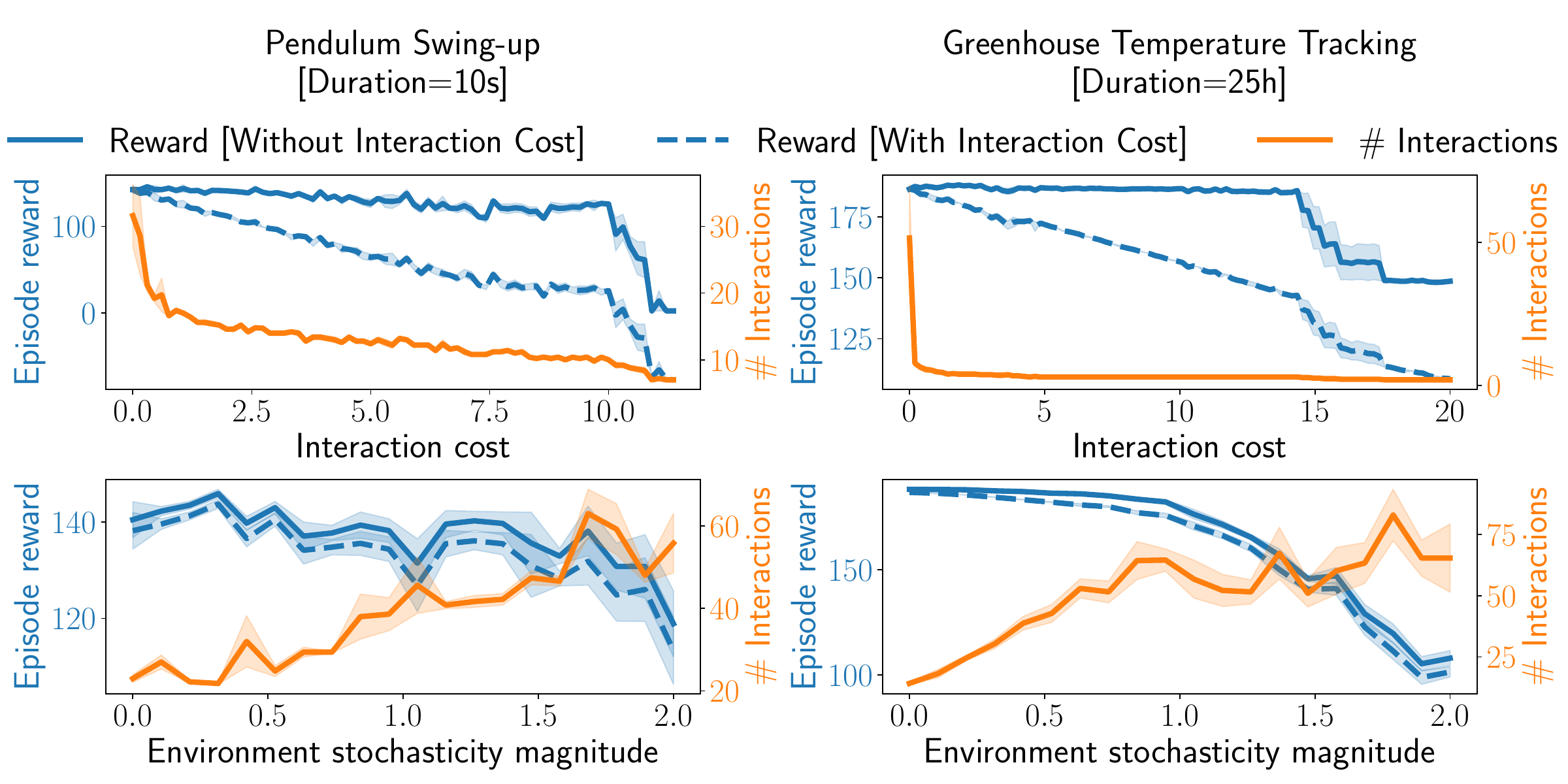}
    \caption{
    Effect of interaction cost (first row) and environment stochasticity (second row) on the 
    number of interactions and episode reward
    for the Pendulum and Greenhouse tasks.    
    } 
    \label{fig: switch cost influence}
\end{figure}
We investigate the setting from \cref{subsection: Transition Cost} with interaction costs.  
In our experiments, we always pick a constant cost, i.e., $c(\vx, \vu) = C$. We study the influence of $C$ on the episode reward and on the number of interactions that the policy has with the system within an episode. 
We again evaluate this on the greenhouse and pendulum environment. For the pendulum, we consider the swing-up task. The results are presented in the first row of~\cref{fig: switch cost influence}. Noticeably, increasing $C$ reduces the number of interactions. The decrease is drastic for the greenhouse environment since it can be controlled with considerably fewer interactions without having any effect on the performance. Generally, we observe that decreasing the number of interactions, that is, increasing $C$, also results in a slight decline in episode reward.
\paragraph{How does environment stochasticity influence the number of interactions?}
We analyze the influence of the environment's stochasticity, i.e., the magnitude of the diffusion term $\vg^*$, on the episode reward and number of interactions on \ourframework. 
Intuitively, the more stochastic the environment, the more interactions we would require to stabilize the system. We again evaluate our method on the greenhouse and pendulum swing-up tasks. The results are reported in the second row of~\cref{fig: switch cost influence}. The results verify our intuition that more stochasticity in the environment generally leads to more interactions. However, we observe that the policy is still able to achieve high rewards for a wide range of magnitude of $\vg^*$. This showcases the robustness and adaptability of \ourframework to stochastic environments. 
\paragraph{How does $t_{\min}$ influence \ourframework?} 
\begin{figure}[H]
    \centering
    \includegraphics[width=\linewidth]{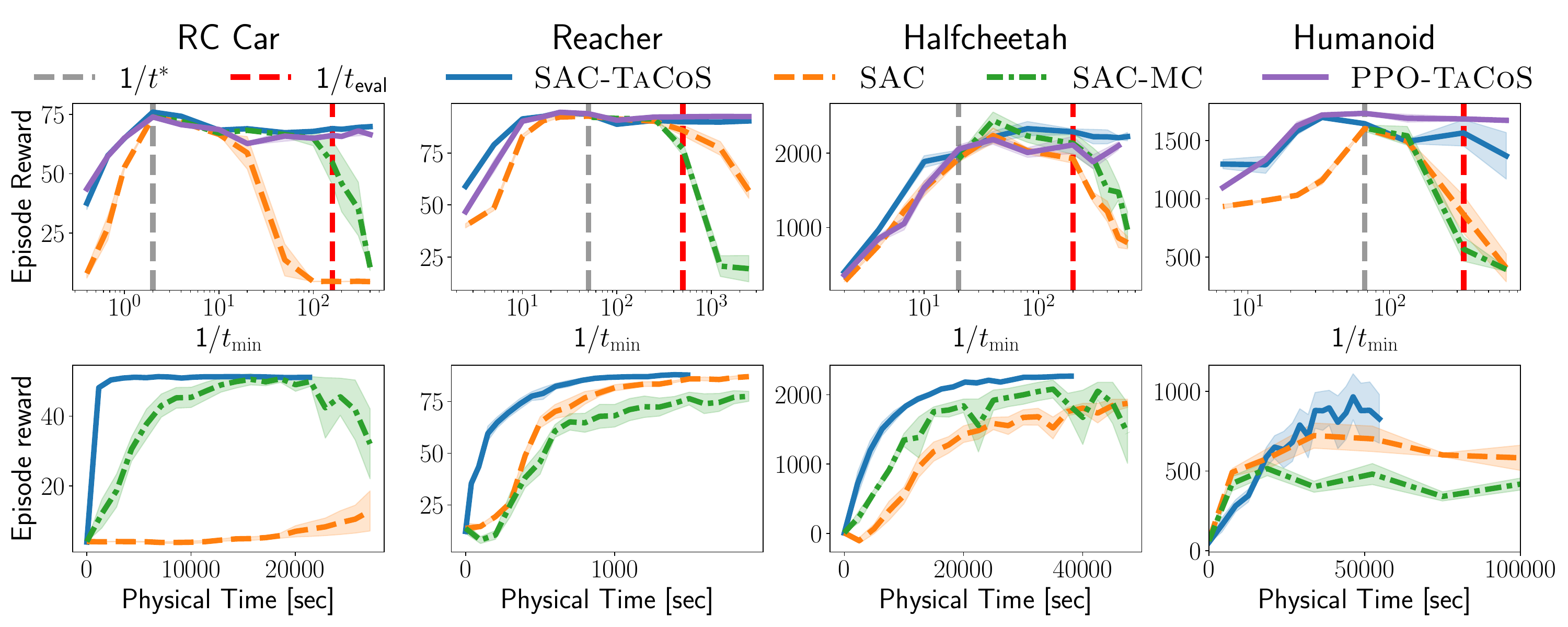}
    \caption{\looseness-1 
    We compare the performance of \ourframework in combination with \textsc{SAC} and \textsc{PPO} with the standard \textsc{SAC} algorithm and 
    \textsc{SAC} with more compute (\textsc{SAC-MC})
    over a range of values for $t_{\min}$ (first row). In the second row, we plot the episode reward versus the physical time in seconds spent in the environment for \ouralgorithm, \textsc{SAC}, and \textsc{SAC-MC} for a specific evaluation frequency $\nicefrac{1}{t_{\text{eval}}}$. We exclude \textsc{PPO}-\ourframework in this plot as it, being on-policy, requires significantly more samples than the off-policy methods. While all methods perform equally well for standard discretization (denoted with $1/t^*$), our method is robust to interaction frequency and does not suffer a performance drop when we decrease $t_{min}$. 
    }
    \label{fig: varying integration dt}
\end{figure}
As highlighted in \cref{section: Introduction}, picking the right discretization for interactions is a challenging task. We show that \ourframework can naturally alleviate this issue and adaptively pick the frequency of interaction while also being more computationally and data-efficient. 
Moreover, we show that \ourframework is robust to the choice of $t_{\min}$, which represents the minimal duration an action has to be applied, i.e., its inverse is the highest frequency at which we can control the system. In this experiment, we consider \ouralgorithm and compare it to the standard \textsc{SAC} algorithm. 
\ourframework adaptively picks the number of interactions and therefore during an episode of time $T$, it effectively collects less data than the standard discrete-time RL algorithm.\footnote{A standard RL algorithm would collect  $\nicefrac{T}{t_{\min}}$ data points per episode.} This makes comparison to the discrete-time setting challenging since environment interactions and physical time on the environment are not linearly related for \ourframework as opposed to the standard discrete-time setting. Nevertheless, to be fair to the discrete-time method, we give \textsc{SAC} more physical time on the system for all environments, effectively resulting in the collection of more data for learning. Since the standard \textsc{SAC} algorithm updates the policy relative to the data amount, we consider a version of \textsc{SAC}, \textsc{SAC-MC} (\textsc{SAC} more compute), which leverages the additional data it collects to perform more gradient updates. This version essentially performs more policy updates than \ouralgorithm and thus is computationally more expensive. Furthermore, to demonstrate the generality of our framework, we also combine \ourframework with PPO (\textsc{PPO-\ourframework}).

We report the performance after convergence across different $t_{\min}$ in the first row of \cref{fig: varying integration dt}. From our experiment, we conclude that \ouralgorithm and \textsc{PPO-\ourframework} are robust to the choice of $t_{\min}$ and perform equally well when $t_{\min}$ is decreased, i.e., frequency is increased. This is in contrast to the standard RL methods, which have a significant drop in performance at high frequencies. This observation is also made in prior work~\citep{hafner2019dream}. 
Crucially, this highlights the sensitivity of the standard RL methods to the frequency of interaction.
In the second row of \cref{fig: varying integration dt} we show the learning curve of the methods for a specific frequency $\nicefrac{1}{t_{\text{eval}}}$. From the curve, we conclude that \ouralgorithm achieves higher rewards with significantly less physical time on 
the environment. We believe this is because our method explores more efficiently ~\citep[akin to][]{dabney2020temporally, eberhard2022pink}, and also learns a much stronger/continuous-time representation of the underlying MDP.

Interestingly, at the default frequency used in the benchmarks $\nicefrac{1}{t^*}$, all methods perform similarly.
However, slightly decreasing the frequency already leads to a drastic drop in performance for all methods. Intuitively, decreasing the frequency prevents us from performing the necessary fine-grained control and obtaining the highest performance. 

While we have access to the optimal frequency $\nicefrac{1}{t^*}$ for these benchmarks, for a general and unknown system it is very difficult to estimate this frequency. Furthermore, as we observe in our experiments, picking a very high frequency is also not an option when using standard RL algorithms. We believe this is where \ourframework excels as it adaptively picks the frequency of interaction, thereby relieving the problem designer of this decision.

\section{Efficient Exploration for \ourframework via Model-Based RL}
\label{section: Model Based Setting}
In this section, we propose a novel model-based RL algorithm for \ourframework called \textbf{O}ptimistic 
 \textbf{\ourframework} (\ouralgorithmmodelbased). 
We analyze the episodic setting, where we interact with the system in episodes $n=1, \ldots, N$. In episode $n$, we execute the policy $\vpi_n$, collect measurements and integrated rewards $(\vx_{n, 0}, b_{n, 0}), \ldots, (\vx_{n, k_n}, b_{n, k_n})$, and prepare the data $\setD_n = \set{(\vz_{n, 1}, \vy_{n, 1}), \ldots, (\vz_{n, k_n}, \vy_{n, k_n})}$, where $\vz_{n, i} = (\vx_{n, i-1}, \vu_{n, i-1}, t_{n, i-1})$ and $\vy_{n, i} = (\vx_{n, i}, b_{n, i})$. From the dataset $\setD_{1:n} \defeq \cup_{i\le n}\setD_i$ we build a model $\setM_n$ for the unknown function $\mPhi^*$ such that it is well-calibrated in the sense of the following definition.
\begin{definition}[Well-calibrated statistical model of $\mPhi^*$, \cite{rothfuss2023hallucinated}]
\label{definition: well-calibrated model}
    Let $\setZ \defeq \setX \times \setU \times \setT $.
    We assume $\mPhi^* \in \bigcap_{n \ge 0}\setM_n$ with probability at least $1-\delta$, where statistical model $\setM_n$ is defined as
    \begin{align*}
        \setM_n \defeq \set{\vf: \setZ \to \R^{d_x + 1} \mid \forall \vz \in \setZ, \forall j \in \set{1, \ldots, d_x + 1}: \abs{\mu_{n, j}(\vz) - f_j(\vz)} \le \beta_n(\delta) \sigma_{n, j}(\vz)},
    \end{align*}
    Here, $\mu_{n, j}$ and $\sigma_{n, j}$ denote the $j$-th element in the vector-valued mean and standard deviation functions $\vmu_n$ and $\vsigma_n$ respectively, and $\beta_n(\delta) \in \Rzero$ is a scalar function that depends on the confidence level $\delta \in (0, 1]$ and which is monotonically increasing in $n$.
\end{definition}
Similar to model-based RL algorithms for the discrete-time setting~\citep{kakade2020information, curi2020efficient, sukhija2024optimistic}, we follow the principle of optimism in the face of uncertainty and select the policy $\vpi_n$ for both settings of \ourframework (cf.~\Cref{subsection: Transition Cost,subsection: Bounded Number of Transitions}) by solving:
\begin{align}
\label{eq: Optimistic planning}
    \vpi_n &\defeq \argmax_{\vpi \in \Pi_{\square}}\max_{\mPhi \in \setM_{n-1}} V_{\vpi, \mPhi}(\vx_0, T),
\end{align}
where $\square \in \set{IC, BI}$ is the appropriate policy class from \Cref{section: Reformulation to MDPs}. Running \ouralgorithmmodelbased for $N$ episodes, we measure the performance via the {\em regret}:
\begin{align*}
    R_N &= \sum_{n=1}^N \bigl(V_{\vpi^*, \mPhi^*}(\vx_0, T) - V_{\vpi_n, \mPhi^*}(\vx_0, T)\bigr).
\end{align*}
Here $\vpi^*$ is the optimal policy from the class of policies we optimize over. Any kind of regret bound requires certain assumptions on the regularity of the underlying dynamics \eqref{eq: Driving SDE}.

\begin{assumption}[Dynamics model]
\label{assumption: Lipschitz continuity}
    Given any norm $\norm{\cdot}$, we assume that the drift $\vf^*$, and diffusion $\vg^*$ are $L_{\vf^*}$ and $L_{\vg^*}$-Lipschitz continuous, respectively, with respect to the induced metric. We further assume $\sup_{\vz \in \setZ}\norm{\vg^*(\vz)}_F \le A$.
\end{assumption}

\Cref{assumption: Lipschitz continuity} ensures the existence of the SDE \eqref{eq: Driving SDE} solution under policy $\vpi_n$. To provide bounds on the performance of \ouralgorithmmodelbased for settings \Cref{subsection: Transition Cost,subsection: Bounded Number of Transitions} we also need some assumptions on the noise and reward model.

\begin{assumption}[Reward and noise model for \Cref{subsection: Transition Cost} Setting]
\label{assumption: Reward and noise model for Transition Cost Setting}
    Given any norm $\norm{\cdot}$, we assume that running reward $b$ is $L_b$-Lipschitz continuous, with respect to the induced metric. We further assume
    boundedness of the reward $0 \le b^*(\vx, \vu) \le B$, and interaction cost $0 \le c(\vx, \vu) \le C$.
    The dynamics noise is independent and follows: $\vw_k^{\vx} \sim \normal{0}{\sigma^2(\vx_k, \vu_k, t_k)I_{d_x}}$.
\end{assumption}

\begin{assumption}[Reward and noise model for \cref{subsection: Bounded Number of Transitions} Setting]
\label{assumption: Reward and noise model for Bounded Number of Transitions Setting}
    Given any norm $\norm{\cdot}$, we assume that the running reward $b$ is $L_b$-Lipschitz continuous, w.r.t.~to the induced metric.
\end{assumption}
Finally, we assume that we learn a well-calibrated model of the unknown flow $\mPhi^*$.
\begin{assumption}[Well calibration assumption]
   Our learned model is an all-time-calibrated statistical model of $\mPhi^*$, i.e., there exists an increasing sequence of $\left(\beta_{n}(\delta)\right)_{n\geq0}$ such that our model satisfies the well-calibration condition, cf., \Cref{definition: well-calibrated model}.
   \label{ass: well-calibration}
\end{assumption}
Analogous assumptions are made for model-based RL algorithms in the discrete-time setting~\citep{curi2020efficient, sukhija2024optimistic}. This calibration assumption is satisfied if $\mPhi^*$ can be represented with Gaussian Process (GP) \citep{williams2006gaussian,kirschner2018information} models. 

\begin{theorem}
\label{theorem: sublinear regret for wtcorl}
    Consider the setting from \cref{subsection: Transition Cost} and 
    let \cref{assumption: Lipschitz continuity}, \ref{assumption: Reward and noise model for Transition Cost Setting}, and \cref{ass: well-calibration} hold. Then we have with probability at least $1 - \delta$:
    \begin{equation*}
        R_N \le\mathcal{O}\left(\beta_{N-1}T^{3/2}\sqrt{N\setI_N}\right) 
        \end{equation*}
        Now consider, the setting with a bounded number of switches $K$, and let \cref{assumption: Lipschitz continuity}, \ref{assumption: Reward and noise model for Bounded Number of Transitions Setting}, and \cref{ass: well-calibration} hold. Then, we get with probability at least $1-\delta$
        \begin{equation*}
            R_N  \le \mathcal{O}\left(\beta_{N-1}^{K}Ke^{D(L_{\vf^*} + L_{\vg^*}^2)(1+L_{\vpi})TK}\sqrt{N\setI_N}\right),
        \end{equation*}
    where $D$ is a constant. Here, with $\setI_N$ we denote the model-complexity after observing $N$ points~\citep{curi2020efficient}, which quantifies the difficulty of learning $\mPhi^*$. For GPs, it behaves similar to
    the \emph{maximum information gain} $\gamma_N$~\citep{srinivas2009gaussian}, i.e., implying sublinear regret for several common kernels~\citep{vakili2021information}.
\end{theorem}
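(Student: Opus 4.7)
The plan is to follow the optimism-under-uncertainty template adapted to the discrete-time MDP reformulations \eqref{eq: transition cost reforumalted} and \eqref{eq: bounded switches reforumalted}. First, on the high-probability event $\mathcal{E} = \{\mPhi^* \in \bigcap_n \setM_n\}$, which holds with probability at least $1-\delta$ by \Cref{ass: well-calibration}, the optimistic rule \eqref{eq: Optimistic planning} produces a pair $(\vpi_n, \tilde\mPhi_n)$ with $\tilde\mPhi_n \in \setM_{n-1}$ and $V_{\vpi_n, \tilde\mPhi_n}(\vx_0, T) \ge V_{\vpi^*, \mPhi^*}(\vx_0, T)$, so each per-episode regret term is bounded by the \emph{same-policy} value gap $V_{\vpi_n, \tilde\mPhi_n}(\vx_0, T) - V_{\vpi_n, \mPhi^*}(\vx_0, T)$, in which only the underlying flow differs.

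Second, I would roll $\vpi_n$ forward under both $\tilde\mPhi_n$ and $\mPhi^*$ and derive a discrete-time simulation lemma by telescoping per-step rewards. Each per-step term is controlled by well-calibration as $2\beta_{n-1}(\delta)\,\sigma_{n-1}(\vz_{n,k})$ plus a propagated state error, where the latter obeys a Gronwall-type recursion in the Lipschitz constant of $\vs \mapsto \mPsi_{\mPhi^*}(\vs, \vpi_n(\vs), \cdot)$. Using \Cref{assumption: Lipschitz continuity}, an It\^o-isometry argument applied to \eqref{eq: Driving SDE} shows that the expected flow $\mPhi_{\vf^*}(\vx, \vu, t)$ is $\bigl(1 + D(L_{\vf^*}+L_{\vg^*}^2)t\bigr)$-Lipschitz in $\vx$, and composition over $K$ steps with an $L_{\vpi}$-Lipschitz policy yields the exponential factor $e^{D(L_{\vf^*}+L_{\vg^*}^2)(1+L_{\vpi})TK}$ for the bounded-interactions regime. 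For the interaction-cost regime I would exploit that $\mPhi^*$ already contains the integrated-reward coordinate (cf.~\eqref{eq: definition of learning function}), whose uncertainty per duration $t_k$ is calibrated directly; combined with the boundedness of $b^*$ and $c$ in \Cref{assumption: Reward and noise model for Transition Cost Setting} and the horizon constraint $\sum_k t_k = T$ (so $K\le T/t_{\min}$), a Cauchy--Schwarz argument on $\{\sqrt{t_k}\}$ and $\{\sigma_{n-1}(\vz_{n,k})\}$ replaces the exponential by the polynomial $T^{3/2}$ prefactor.

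Third, summing across episodes reduces everything to bounding $\sum_{n=1}^N \sum_{k=1}^{k_n} \beta_{n-1}(\delta)\,\sigma_{n-1}(\vz_{n,k})$, which by Cauchy--Schwarz and the model-complexity bound of \cite{curi2020efficient} is $\mathcal{O}\bigl(\beta_{N-1}\sqrt{N \setI_N}\bigr)$. Multiplying by the regime-dependent prefactors from the previous step and taking a union bound on $\mathcal{E}$ delivers the two stated bounds at confidence $1-\delta$.

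The main obstacle is the Lipschitz/Gronwall analysis of the expected flow in the stochastic setting: bounding $\E\|\tilde\vx_{k+1}-\vx_{k+1}^*\|$ requires an SDE argument that handles drift (via Gronwall) and diffusion (via It\^o isometry using $\sup_{\vz}\|\vg^*(\vz)\|_F \le A$) jointly, while carrying the policy Lipschitz constant through both the control and duration outputs of $\vpi_n$. A secondary delicate point is avoiding the exponential blow-up in the interaction-cost regime: this relies crucially on the augmented-state construction in \Cref{section: Reformulation to MDPs}, which makes the integrated reward a learned coordinate of $\mPhi^*$ rather than an artifact of state propagation.
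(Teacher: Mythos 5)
Your overall architecture (optimism to reduce per-episode regret to a same-policy value gap, a telescoping/simulation lemma across the two flows, one-step errors controlled by $2\beta_{n-1}\sigma_{n-1}$, and a final Cauchy--Schwarz against the model complexity $\setI_N$) matches the paper, and your treatment of the bounded-interactions branch is essentially the paper's: it proves Lipschitzness of $\mPhi_{\vf^*}$ and $\Phi_{b^*}$ via It\^o's lemma and Gr\"onwall (yielding an $e^{\mathcal{O}((L_{\vf^*}+L_{\vg^*}^2)t)}$ constant), composes over $K$ steps with the $L_{\vpi}$-Lipschitz policy, and invokes the sub-Gaussianity of the flow noise (established via transportation-cost inequalities) together with Lemma~5 of \citet{curi2020efficient}.

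However, there is a genuine gap in your interaction-cost branch: you misidentify the mechanism that avoids the exponential blow-up. You attribute it to the integrated reward being a learned, calibrated coordinate of $\mPhi^*$, plus boundedness of $b^*$ and $c$ and a Cauchy--Schwarz over $\{\sqrt{t_k}\}$. But calibration of the reward coordinate only controls the \emph{one-step} reward error; the telescoping still leaves terms of the form $V_{\vpi_n,\mPhi_n}(\vx_{k+1},t_{k+1}) - V_{\vpi_n,\mPhi_n}(\widehat\vx_{k+1},t_{k+1})$, i.e., the downstream effect of the state mismatch on all future rewards, and bounding these by Lipschitzness of the value function would reintroduce the exponential in the horizon. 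The paper escapes this by using the Gaussian noise assumption in \Cref{assumption: Reward and noise model for Transition Cost Setting} ($\vw_k^{\vx}\sim\normal{0}{\sigma^2 I_{d_x}}$) together with the smoothing lemma of \citet{kakade2020information} (their Lemma~C.2): after shifting the value function by $\frac{C}{t_{\min}}T$ to make it positive and using $|V|\le T(B+\frac{C}{t_{\min}})$, the difference of expectations under two Gaussians with different means is bounded by $\frac{\norm{\vmu_1-\vmu_2}}{\sigma^2}$ times the sup of the (bounded) function, with no Lipschitz propagation at all. This is precisely why the two settings carry different noise assumptions (Gaussian versus merely sub-Gaussian) and why only the second bound is exponential in $K$; your proposal never invokes the Gaussian structure, so as written the first bound does not go through. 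The remaining bookkeeping you describe ($K\le T/t_{\min}$, Cauchy--Schwarz giving the extra $\sqrt{T}$, hence $T^{3/2}$) is correct once this smoothing step is in place.
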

As a proof of concept, we evaluate \ouralgorithmmodelbased on the pendulum and RC car environment for the interaction cost setting. \footnote{The code is available at \url{https://github.com/lasgroup/model-based-rl}.} As baselines, we adapt common model-based RL methods such as \textsc{PETS}~\citep{chua2018pets} and planning with the mean to \ourframework.
We call them \textsc{PETS}-\ourframework and \textsc{Mean}-\ourframework, respectively. 
The result is reported in \cref{fig: model based}. From the figure, we conclude that \ouralgorithmmodelbased is more sample efficient than other model-based baselines and \ouralgorithm(\ouralgorithm requires circa~$6000$ episodes for the pendulum and $2000$ for the RC car). 
\begin{figure*}[ht]
    \centering
        \centering
\includegraphics[width=\textwidth]{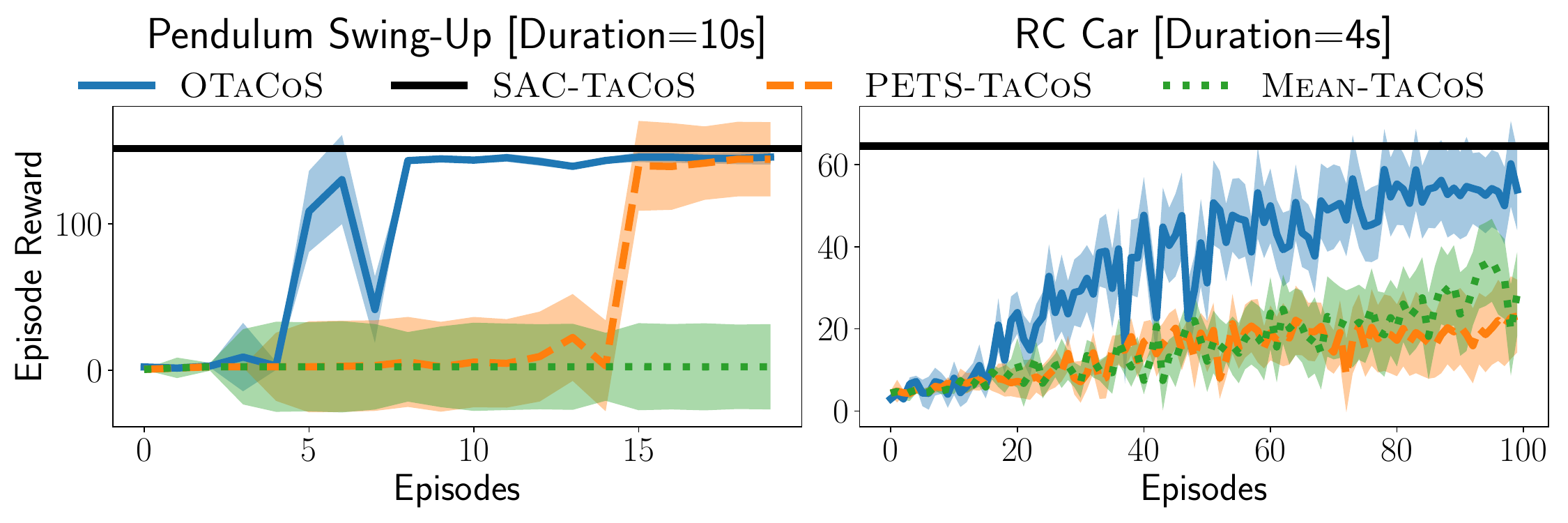}
    \caption{We run \ouralgorithmmodelbased on the pendulum and RC car environment. We report the achieved reward averaged over five different seeds with one standard error.}
    \label{fig: model based}
\end{figure*}
\section{Related Work}
\label{section: Related Work}
Similar to this work, \citet{holt2023active,ni2022continuous,karimi2023decision} consider continuous-time deterministic dynamical systems where the measurements or control input changes can only happen at discrete time steps. Moreover,
\citet{holt2023active} proposes a similar problem as ours from \cref{subsection: Transition Cost}, where they specify a cost on the number of interactions. However, their solution is based on a heuristic, where a measurement is taken when the variance of the potential reward surpasses a prespecified threshold. On the contrary, we directly tackle this problem at hand and propose a general framework for time-adaptive control that does not rely on any heuristics. 
\citet{karimi2023decision} 
adapt SAC~\citep{haarnoja2018soft} to include a regularization term, which effectively adds a cost for every discrete interaction. 
\citet{ni2022continuous} induce a soft-constraint on the duration $\tau$ of each action in the environment. However, all the aforementioned works propose heuristic techniques to minimize interactions, whereas we formalize the problem systematically for the more general case of SDEs and show that it has an underlying MDP structure that any RL algorithm can leverage. In addition, we propose a no-regret model-based RL algorithm for this setting and analyze its sample complexity.

Temporal abstractions are considered also in the framework of options \citep{sutton1999between,mankowitz2014time, mann2014scaling,harb2018waiting}.
However, a key difference to \ourframework is that in the options framework, the agent measures the state even between the controller switches.


\paragraph{Learning to repeat actions}
\label{paragraph: Learning to repeat actions}
Several works observe that repeating actions in the discrete-time MDPs problems such as Atari \citep{mnih2013playing,braylan2015frame} or Cartpole \citep{hafner2019dream} significantly increase the speed of learning. However, the action repeat is fixed through the entire rollout and treated as a hyperparameter. \citet{durugkar2016deep, NIPS2016_c4492cbe, srinivas2017dynamic, sharma2017learning, lee2020reinforcement, grigsby2021towards, chen2021addressing, nam2021reinforcement, yu2021taac, biedenkapp2021temporl, krale2023act} automate the selection of action repeat, and show superior performance over the fixed number setting. \citet{dabney2020temporally} empirically show that repeating the actions helps with the exploration, effectively having a similar effect that colored noise exploration has over the standard white noise exploration \citep{eberhard2022pink}.



\paragraph{Continuous-time RL}
\label{paragraph: Options}
Following the seminal work of~\citet{doya2000reinforcement} and the advances in Neural ODEs of \citet{chen2018neural}, continuous-time RL has regained interest~\citep{cranmer2020lagrangian, greydanus2019hamiltonian, yildiz2021continuous,lutter2021value}. 
Moreover, modeling in continuous-time is found to be particularly useful when learning from different data sources where each source is collected at a different frequency~\citep{burns2023offline,zheng2023semi}. 
An important line of work exists for modeling continuous dynamics for the case when states and actions are discrete, called Markov Jump Processes~\citep{10.1093/acprof:oso/9780199657063.003.0010, Berger1993, huang2019continuous, seifner2023neural}. Another line of work that is close to ours is event and self-Triggered Control \citep{astrom2002comparison,anta2010sample,heemels2012introduction, heemels2021event}, where they model continuous-time control systems by implementing changes to the input only when stability is at risk, ensuring efficient and timely interventions. \citet{treven2023efficient} propose a no-regret continuous-time model-based RL algorithm, which akin to \ouralgorithmmodelbased, performs optimistic exploration. They study the problem where controls can be executed continuously in time and propose adaptive measurement selection strategies. Similarly, we propose a novel model-based RL algorithm, \ouralgorithmmodelbased, based on the principle of optimism in the face of uncertainty. We show that \ouralgorithmmodelbased has no regret for sufficiently smooth dynamics and has considerable sample-efficiency gains over its model-free counterpart.\looseness -1

\section{Conclusion and discussion}
\label{Conclusion and discussion}
We study the problem of time-adaptive RL for continuous-time systems with continuous state and action spaces. We investigate two practical settings 
where each interaction has an inherent cost and where we have a hard constraint on the number of interactions. 
We propose a novel RL framework, \ourframework, and show that both of these settings result in extended MDPs which can be solved with standard RL algorithms. 
In our experiments, we show that combining standard RL algorithms with \ourframework results in a significant reduction in the number of interactions without having any effect on the performance for the interaction cost setting. Furthermore, for the second setting,
\ourframework achieves considerably better control performance despite having a small budget for the number of interactions. 
Moreover, we show that \ourframework improves robustness to a large range of interaction frequencies, and generally improves sample complexity of learning. 
Finally, we propose, \ouralgorithmmodelbased,
a no-regret model-based RL algorithm for \ourframework and show that it has further sample efficiency gains.


\begin{ack}
This project has received funding from the Swiss National Science Foundation under NCCR Automation, grant agreement 51NF40 180545, the Microsoft Swiss Joint Research Center, grant of the Hasler foundation (grant no. 21039) and the SNSF Postdoc Mobility Fellowship 211086.
\end{ack}

{
\bibliographystyle{apalike}
\bibliography{refs.bib}
}



\newpage
\appendix

\renewcommand*\contentsname{Contents of Appendix}
\etocdepthtag.toc{mtappendix}
\etocsettagdepth{mtchapter}{none}
\etocsettagdepth{mtappendix}{subsection}
\tableofcontents

\newpage
\section{Extended Theory}
\label{section: Extended Theory}
In this section, we prove \cref{theorem: sublinear regret for wtcorl} for \ouralgorithmmodelbased. We separate the section into two parts; proof for the transaction cost setting (\cref{subsection: Proof for Transition Cost setting}) and the proof for the bounded number of switches setting (\cref{subsection: Proof for Bounded number of switches}).

We start with the definitions of model complexity and sub-Gaussian random vector that we will use extensively in this section.

\begin{definition}[Model Complexity]
We define the model complexity as is defined by \citet{curi2020efficient}.
    \begin{equation}
    \setI_N := \underset{\setD_1, \dots, \setD_N}{\max}\sum^{N}_{n=1} \sum_{(\vx, \vu, t) \in \setD_n} \norm{\vsigma_n(\vx, \vu, t)}^2_2.
\end{equation}
\end{definition}

\begin{definition}
    A random variable $x \in \R$ is said to be sub-Gaussian with variance proxy $\sigma^2$ if $\E[x] = 0$ and we have:
    \begin{align*}
        \E[e^{tx}] \le e^{\frac{\sigma^2t^2}{2}}, \quad \forall t \in \R
    \end{align*}
    A random vector $\vx \in \R^{d}$ is said to be sub Gaussian with variance proxy $\sigma^2$ if for any $\ve \in \R^d, \norm{\ve}_2 = 1$ the random variable $\vx^\top \ve$ is $\sigma^2$ sub Gaussian. We write $\vx \sim \subGaussian{\sigma^2}$.
\end{definition}
In the following, we will be distinguishing between the state of the augmented MDP $\vs$ and the true state of the dynamical system $\vx$. The augmented state at time step $k$ includes the true state of the system, $\vx_k$, the integrated reward $b_k$ between $k-1$ and $k$, and the time to left to go $t_k$, i.e., $\vs_k = [\vx_k^{\top}, b_k, t_k]^\top$.

\subsection{Transition Cost setting}
\label{subsection: Proof for Transition Cost setting}
We prove our regret bound for the transition cost case in the following. We start with the difference lemma which adapts \citet[Lemma 2]{sukhija2024optimistic} to our setting.
\begin{lemma}[Difference lemma]
\label{lemma: Difference lemma}
Define $V_{\vpi_n, \mPhi}(\vx, \tau)$ as
\begin{equation*}
    \E_{\vpi, \mPhi}\left[\sum^{K(\tau)-1}_{k \ge 0}r(\vs_{k}, \vpi(\vs_{k})) \Big\vert \vx_{0}=\vx\right]; \; \text{where } \sum_{k=0}^{K(\tau)-1}\pi_{\setT}(\vx_k, t_k) = \tau
\end{equation*}
that is the total reward starting with time to go $\tau$ and state $\vx$ for the policy $\vpi$ and dynamics $\mPhi$. Here the expectation w.r.t.~$\vpi, \mPhi$ represents the expectation of the underlying trajectory induced by the policy $\vpi$ on the dynamics $\mPhi$.
Then we have for all $\vpi$, $\mPhi'$, $\mPhi^*$, $\vx_0$, $T < 0$;
    \begin{align}
    \label{eq:regret_bound}
         V_{\vpi, \mPhi'}(\vx_0, T) - V_{\vpi, \mPhi^*}(\vx_0, T) = \E_{\vpi, \mPhi^*} \left[\sum_{k \ge 0} V_{\vpi, \mPhi'}(\widehat{\vx}_{k+1}, t_{k+1}) - V_{\vpi, \mPhi'}(\vx_{k+1}, t_{k+1})\right],
    \end{align}
    where $\widehat{\vx}_{k+1}$ is the state of $\widehat{\vs}_{k+1} =  \mPsi_{\mPhi'}(\vs_{k}, \vpi(\vs_{k}), \vw_{k})$ and $\vx_{k+1}$ is the state of $\vs_{k+1} =  \mPsi_{\mPhi^*}(\vs_{k}, \vpi(\vs_{k}), \vw_{k})$. 
\end{lemma}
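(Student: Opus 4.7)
The plan is a standard simulation--lemma / telescoping argument that exploits the noise coupling built into the statement, namely that $\widehat{\vs}_{k+1}$ and $\vs_{k+1}$ share the realisation $\vw_k$. First I would run $\vpi$ under the true dynamics $\mPhi^*$ to produce a trajectory $\vs_0, \vs_1, \ldots, \vs_K$ with $t_K = 0$, so that $V_{\vpi, \mPhi'}(\vs_K) = 0$ by the terminal-time convention. Since $\vs_0 = (\vx_0, 0, T)$ is deterministic, inserting an outer expectation and using a plain telescope of $V_{\vpi, \mPhi'}$ along $\vs_0, \ldots, \vs_K$ combined with the definition $V_{\vpi, \mPhi^*}(\vx_0, T) = \E_{\vpi, \mPhi^*}[\sum_k r(\vs_k, \vpi(\vs_k))]$ yields
\begin{align*}
V_{\vpi, \mPhi'}(\vx_0, T) - V_{\vpi, \mPhi^*}(\vx_0, T) = \E_{\vpi, \mPhi^*}\!\left[\sum_{k=0}^{K-1}\!\bigl(V_{\vpi, \mPhi'}(\vs_k) - V_{\vpi, \mPhi'}(\vs_{k+1}) - r(\vs_k, \vpi(\vs_k))\bigr)\right].
\end{align*}

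Next I would invoke the Bellman identity for $V_{\vpi, \mPhi'}$ pointwise at $\vs_k$, $V_{\vpi, \mPhi'}(\vs_k) = \E[r(\vs_k, \vpi(\vs_k)) + V_{\vpi, \mPhi'}(\widehat{\vs}_{k+1}) \mid \vs_k]$, where $\widehat{\vs}_{k+1} = \mPsi_{\mPhi'}(\vs_k, \vpi(\vs_k), \vw_k)$ and the inner expectation is over $\vw_k$. Substituting cancels the per-step reward with its counterpart inside the Bellman expectation, so the $k$-th summand collapses to $\E[V_{\vpi, \mPhi'}(\widehat{\vs}_{k+1}) \mid \vs_k] - V_{\vpi, \mPhi'}(\vs_{k+1})$; applying the tower property to the outer $\E_{\vpi, \mPhi^*}$ and observing that $V_{\vpi, \mPhi'}$ depends only on the $(\vx, t)$-coordinates of the augmented state delivers precisely the RHS of~\eqref{eq:regret_bound}.

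The mildly delicate point, and the one I expect to be the main obstacle, is the bookkeeping of the two different integrated rewards: the $r(\vs_k, \vpi(\vs_k))$ that appears inside the Bellman identity for $V_{\vpi, \mPhi'}$ has conditional mean $\Phi_{b'} - c$ under $\mPhi'$, whereas the $r(\vs_k, \vpi(\vs_k))$ in $V_{\vpi, \mPhi^*}(\vx_0, T)$ has mean $\Phi_{b^*} - c$. Under the shared-noise coupling the two realisations differ exactly by $\Phi_{b'} - \Phi_{b^*}$, which coincides with the $b$-coordinate gap between $\widehat{\vs}_{k+1}$ and $\vs_{k+1}$, so the reward mismatch is absorbed into the single value-function difference of the claim. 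Everything else is routine: no concentration, Lipschitz, or regularity arguments are invoked at this stage, as the identity is purely structural and the probabilistic content enters only through the shared-noise coupling.
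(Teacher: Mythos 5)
Your proposal is correct and follows essentially the same route as the paper: the paper proves the identity by an inductive one-step add-and-subtract using the Bellman identity for $V_{\vpi,\mPhi'}$ under the shared-noise coupling, which is exactly your telescope organised step by step. The reward-bookkeeping subtlety you flag (the per-step reward under $\mPhi'$ versus $\mPhi^*$ differing by the $b$-coordinate gap, which must be folded into the value difference since $V$ is written as a function of $(\vx,t)$ only) is real and is glossed over in the paper's own proof, so your explicit treatment of it is if anything more careful than the original.
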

\begin{proof}
    \begin{align*}
        V_{\vpi, \mPhi^*}(\vx_{0}, T) &= \E_{\vpi, \mPhi^*}\left[\sum_{k \ge 0}r(\vs_{k}, \vpi(\vs_{k}))\right] \\ 
        &= \E_{\vpi, \mPhi^*}\left[r(\vs_{0}, \vpi(\vs_{0})) + \sum_{k \ge 1}r(\vs_{k}, \vpi(\vs_{k}))\right] \\
        &= \E_{\vpi, \mPhi^*}\left[r(\vs_{k}, \vpi(\vs_{0})) + V_{\vpi, \mPhi^*}(\vx_{{1}}, t_{1})\right] \\
        &= \E_{\vpi, \mPhi^*}\left[r(\vs_{k}, \vpi(\vs_{0})) + V_{\vpi, \mPhi'}(\widehat\vx_{{1}}, t_{1}) - V_{\vpi, \mPhi'}(\vx_{0}, T) \right] + \\ &+\E_{\vpi, \mPhi^*}\left[V_{\vpi, \mPhi}(\vx_{0}, T)- V_{\vpi, \mPhi'}(\widehat\vx_{{1}}, t_{1}) + V_{\vpi, \mPhi^*}(\vx_{{1}}, t_{1})\right] \\
        &= V_{\vpi, \mPhi'}(\vx_{0}, T) + \E_{\vpi, \mPhi^*}\left[V_{\vpi, \mPhi}(\vx_{{1}}, t_{1}) - V_{\vpi, \mPhi'}(\widehat\vx_{{1}}, t_{1}) \right] \\
        & + \E_{\vpi, \mPhi^*}\left[V_{\vpi, \mPhi^*}(\vx_{{1}}, t_{1}) - V_{\vpi, \mPhi'}(\vx_{{1}}, t_{1})\right]
    \end{align*}
Hence we have:
\begin{align*}
    &V_{\vpi, \mPhi^*}(\vx_{0}, T) - V_{\vpi, \mPhi'}(\vx_{0}, T) = \\
    & = \E_{\vpi, \mPhi^*}\left[V_{\vpi, \mPhi'}(\vx_{{1}}, t_{1}) - V_{\vpi, \mPhi'}(\widehat\vx_{{1}}, t_{1}) \right]  + \E_{\vpi, \mPhi^*}\left[V_{\vpi, \mPhi^*}(\vx_{{1}}, t_{1}) - V_{\vpi, \mPhi'}(\vx_{{1}}, t_{1})\right]
\end{align*}
By repeating the step inductively the result follows.
\end{proof}
In the following, we leverage the result above to bound the regret of our optimistic planner w.r.t.~the difference in value functions.
\begin{lemma}[Per episode regret bound]
\label{lemma: Per episode regret bound}
    Let \cref{ass: well-calibration} hold, then we have with probability at least $1-\delta$ for all $n\geq 0$.
    \begin{align}
    \label{eq:regret_bound_1}
         V_{\vpi_n, \mPhi^*}(\vx_0, T) - V_{\vpi^*, \mPhi^*}(\vx_0, T) \le \E_{\vpi_n, \mPhi^*}\left[\sum_{k \ge 0} V_{\vpi_n, \mPhi_n}(\widehat{\vx}_{n, k+1}, t_{n, k+1}) - V_{\vpi_n, \mPhi_n}(\vx_{n, k+1}, t_{n, k+1})\right].
    \end{align}
\end{lemma}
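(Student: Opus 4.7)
The plan is to rewrite the right-hand side using the Difference Lemma (\Cref{lemma: Difference lemma}) and then combine the result with optimism and optimality to close the bound. I would apply \Cref{lemma: Difference lemma} with $\vpi = \vpi_n$ and $\mPhi' = \mPhi_n$, where $\mPhi_n \in \setM_{n-1}$ denotes the model attaining the inner optimum in the planner \eqref{eq: Optimistic planning}. Under the natural convention that $\widehat{\vx}_{n, k+1}$ is the state rolled out under $\mPhi_n$ and $\vx_{n, k+1}$ the state under the true dynamics $\mPhi^*$ (both driven by $\vpi_n$ and a common noise realization), the Difference Lemma yields
\begin{equation*}
    V_{\vpi_n, \mPhi_n}(\vx_0, T) - V_{\vpi_n, \mPhi^*}(\vx_0, T) = \E_{\vpi_n, \mPhi^*}\left[\sum_{k \geq 0} V_{\vpi_n, \mPhi_n}(\widehat{\vx}_{n, k+1}, t_{n, k+1}) - V_{\vpi_n, \mPhi_n}(\vx_{n, k+1}, t_{n, k+1})\right],
\end{equation*}
so the right-hand side of the claim equals exactly $V_{\vpi_n, \mPhi_n}(\vx_0, T) - V_{\vpi_n, \mPhi^*}(\vx_0, T)$, and it suffices to establish the algebraic inequality $V_{\vpi_n, \mPhi^*}(\vx_0, T) - V_{\vpi^*, \mPhi^*}(\vx_0, T) \le V_{\vpi_n, \mPhi_n}(\vx_0, T) - V_{\vpi_n, \mPhi^*}(\vx_0, T)$.

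Next, I would condition on the high-probability event $\{\mPhi^* \in \setM_{n-1} \text{ for all } n \ge 0\}$, which holds with probability at least $1-\delta$ by \Cref{ass: well-calibration}. Reading the planning rule \eqref{eq: Optimistic planning} in its optimistic-maximization form, namely that $(\vpi_n, \mPhi_n)$ jointly maximizes $V_{\vpi, \mPhi}$ over $\Pi_{\square} \times \setM_{n-1}$, the feasibility of $(\vpi^*, \mPhi^*)$ in that optimization gives the optimism bound $V_{\vpi_n, \mPhi_n}(\vx_0, T) \ge V_{\vpi^*, \mPhi^*}(\vx_0, T)$, while inner maximality of $\mPhi_n$ for the fixed policy $\vpi_n$ gives $V_{\vpi_n, \mPhi_n}(\vx_0, T) \ge V_{\vpi_n, \mPhi^*}(\vx_0, T)$. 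Together with the optimality inequality $V_{\vpi_n, \mPhi^*}(\vx_0, T) \le V_{\vpi^*, \mPhi^*}(\vx_0, T)$ (since $\vpi^*$ maximizes $V_{\cdot, \mPhi^*}$ over $\Pi_{\square}$), I obtain the chain $V_{\vpi_n, \mPhi^*} \le V_{\vpi^*, \mPhi^*} \le V_{\vpi_n, \mPhi_n}$. Adding the two inequalities $V_{\vpi_n, \mPhi^*} \le V_{\vpi^*, \mPhi^*}$ and $V_{\vpi_n, \mPhi^*} \le V_{\vpi_n, \mPhi_n}$ and rearranging yields the required algebraic bound, closing the proof.

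The only part demanding care is the bookkeeping around the calibration event: \Cref{ass: well-calibration} must be invoked in its uniform-in-$n$ form so that the containment $\mPhi^* \in \setM_{n-1}$ holds for every $n$ on the same $1-\delta$ event subsequently used in the regret telescoping of \Cref{theorem: sublinear regret for wtcorl}. The argument itself is purely algebraic at this stage; it does not yet invoke the Lipschitz or boundedness assumptions on the drift, diffusion, or reward. Those enter only in the next step of the regret analysis, where the right-hand side must be controlled by the calibrated posterior bandwidths $\beta_{n-1} \vsigma_{n-1}$ and propagated through the state and reward flows before being summed across episodes.
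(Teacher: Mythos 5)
Your proof is correct and takes essentially the same route as the paper: on the uniform calibration event, use optimism of the planner (read as a joint maximization over $\Pi_{\square}\times\setM_{n-1}$) to bound the per-episode gap by $V_{\vpi_n,\mPhi_n}(\vx_0,T)-V_{\vpi_n,\mPhi^*}(\vx_0,T)$, and then rewrite that difference via \Cref{lemma: Difference lemma}. The only (cosmetic) divergence is that the paper's one-line argument bounds $V_{\vpi^*,\mPhi^*}-V_{\vpi_n,\mPhi^*}$ directly from $V_{\vpi^*,\mPhi^*}\le V_{\vpi_n,\mPhi_n}$, whereas you also invoke optimality of $\vpi^*$ under $\mPhi^*$ so as to cover the left-hand side in the (sign-flipped) form in which the lemma is literally stated.
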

\begin{proof}
Since we choose the policy optimistically, we get
    \begin{align*}
         V_{\vpi^*, \mPhi^*}(\vx_0, T) - V_{\vpi_n, \mPhi^*}(\vx_0, T) \le V_{\vpi_n, \mPhi_n}(\vx_0, T) - V_{\vpi_n, \mPhi^*}(\vx_0, T).
    \end{align*}
    Applying \Cref{lemma: Difference lemma} the result follows.
\end{proof}
Now we derive an upper and lower bound on our value function.
\begin{lemma}[Objective upper bound]
\label{lemma: Objective upper bound}
    Let $\vpi$ be any policy from the class $\Pi_{TC}$ and consider any $T > 0$, then we have:
    \begin{align*}
        -\frac{C}{t_{\min}}T \le V_{\vpi, \mPsi^*}(\vx_0, T) \le BT.
    \end{align*}
\end{lemma}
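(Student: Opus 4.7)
The plan is to bound the two terms in the reward decomposition separately, using the boundedness assumptions from \Cref{assumption: Reward and noise model for Transition Cost Setting} together with the admissibility constraints on $\pi_{\setT}$.

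First I would expand the value function using the definition from \Cref{eq: transition cost reforumalted}:
\begin{equation*}
    V_{\vpi, \mPhi^*}(\vx_0, T) = \E\left[\sum_{k=0}^{K-1} \Xi_{b^*}(\vx_k, \vu_k, \pi_{\setT}(\vx_k, t_k)) - c(\vx_k, \vpi(\vx_k, t_k))\right],
\end{equation*}
where $\vu_k = \vpi(\vx_k, t_k)$ and $\sum_{k=0}^{K-1}\pi_{\setT}(\vx_k, t_k) = T$.

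For the upper bound, I would use $c \ge 0$ to drop the cost term, then bound $\Xi_{b^*}$ pointwise: since $0 \le b^*(\vx, \vu) \le B$, the integral definition in \Cref{eq: definition of integrated reward} gives $0 \le \Xi_{b^*}(\vx_k, \vu_k, \tau) \le B\tau$ almost surely. Summing over $k$ and using $\sum_k \pi_{\setT}(\vx_k, t_k) = T$ yields $V_{\vpi, \mPhi^*}(\vx_0, T) \le BT$.

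For the lower bound, I would use $b^* \ge 0$ (and hence $\Xi_{b^*} \ge 0$) to drop the reward term, leaving $V_{\vpi, \mPhi^*}(\vx_0, T) \ge -\E\left[\sum_{k=0}^{K-1} c(\vx_k, \vu_k)\right] \ge -\E[KC]$. The only remaining step is to bound the (random) number of interactions $K$: since each $\pi_{\setT}(\vx_k, t_k) \ge t_{\min}$ and the total duration is $T$, we have $K \le T/t_{\min}$ almost surely, which gives $V_{\vpi, \mPhi^*}(\vx_0, T) \ge -CT/t_{\min}$.

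No step presents a real obstacle; the proof is essentially a bookkeeping argument combining the per-step reward/cost bounds with the horizon constraint. The only mildly subtle point is ensuring the bound on $K$ holds almost surely rather than only in expectation, which follows directly from the deterministic lower bound $\pi_{\setT} \ge t_{\min}$ imposed by the definition of $\Pi_{TC}$.
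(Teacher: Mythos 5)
Your proposal is correct and follows essentially the same argument as the paper's (very terse) proof: bound the reward term by $B\tau_k$ and sum to $BT$ using $\sum_k \pi_{\setT} = T$, and bound the cost term by $C$ per step with $K \le T/t_{\min}$. Your version just spells out the bookkeeping that the paper leaves implicit.
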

\begin{proof}
    Since running reward is bounded $0 \le b^*(\vx, \vu) \le B$, the number of steps $K$ we can do in an episode is bounded with $0 \le K \le \frac{T}{t_{\min}}$, and switch cost is bounded $0 \le c(\vx, \vu) \le C$ we have:
    \begin{align*}
        -\frac{C}{t_{\min}}T \le V_{\vpi, \mPsi^*}(\vx_0, T) \le BT.
    \end{align*}
\end{proof}
A key lemma we use to bound the difference in value functions is the following from \cite{kakade2020information}.
\begin{lemma}[Absolute expectation Difference Under Two Gaussians (Lemma C.2.~\cite{kakade2020information})]
\label{lem:absolute_diff_gaussians}
Let  $\vz_1 \sim \setN(\vmu_1, \sigma^2 \mI)$ and $\vz_2 \sim \setN(\vmu_2, \sigma^2 \mI)$, and for any (appropriately measurable) positive function $g$, it holds that:
\begin{equation*}
   \E[g(\vz_1)] - \E[g(\vz_2)] \leq \min\set{\frac{\norm{\vmu_1 - \vmu_2}}{\sigma^2}, 1 } \sqrt{\E[g^2(\vz_1)]}
\end{equation*}
\end{lemma}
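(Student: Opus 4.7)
The plan is to establish the bound by splitting the minimum into its two branches and proving each separately. The first branch (the constant $1$) is essentially trivial: since $g \geq 0$, we have
\begin{equation*}
\E[g(\vz_1)] - \E[g(\vz_2)] \leq \E[g(\vz_1)] = \E[g(\vz_1)\cdot 1] \leq \sqrt{\E[g^2(\vz_1)] \cdot \E[1]} = \sqrt{\E[g^2(\vz_1)]},
\end{equation*}
by a single application of Cauchy--Schwarz. This handles the case where the two means are far apart relative to $\sigma$.

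For the second branch, I would perform a change of measure against $p_1$. Denoting by $p_i$ the density of $\vz_i$, write
\begin{equation*}
\E[g(\vz_1)] - \E[g(\vz_2)] = \int g(\vz)\bigl(p_1(\vz)-p_2(\vz)\bigr)\,d\vz = \E_{\vz_1}\!\left[g(\vz_1)\left(1-\tfrac{p_2(\vz_1)}{p_1(\vz_1)}\right)\right],
\end{equation*}
and apply Cauchy--Schwarz with respect to $p_1$:
\begin{equation*}
\E[g(\vz_1)] - \E[g(\vz_2)] \leq \sqrt{\E[g^2(\vz_1)]}\cdot \sqrt{\E_{\vz_1}\!\left[\bigl(1-\tfrac{p_2(\vz_1)}{p_1(\vz_1)}\bigr)^{2}\right]}\,.
\end{equation*}
The second factor is exactly $\sqrt{\chi^{2}(p_2 \,\|\, p_1)}$.

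The crux is then to compute (or at least tightly bound) the chi-square divergence between two isotropic Gaussians with common covariance $\sigma^{2}\mI$. Writing $\vz_1 = \vmu_1 + \sigma\veps$ with $\veps\sim\setN(\vzero,\mI)$, the log-ratio $\log(p_2/p_1)(\vz_1)$ reduces to an affine function of $\veps$, namely $-\|\vmu_1-\vmu_2\|^2/(2\sigma^2) - (\vmu_1-\vmu_2)^\top\veps/\sigma$. Expanding $\E_{\vz_1}[(1-p_2/p_1)^2] = 1 - 2\E[p_2/p_1] + \E[(p_2/p_1)^2]$ and evaluating the two Gaussian moment-generating-function terms explicitly yields the closed form $\chi^{2}(p_2\|p_1) = \exp(\|\vmu_1-\vmu_2\|^2/\sigma^2) - 1$. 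The final step is to linearize this via $e^{x}-1 \leq x$ valid on the regime where this branch of the minimum is active (equivalently, $\|\vmu_1-\vmu_2\|/\sigma^2$ is the small quantity), giving $\sqrt{\chi^{2}} \leq \|\vmu_1-\vmu_2\|/\sigma^2$, and combining with Cauchy--Schwarz yields the claimed bound.

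The main obstacle, if any, is the final linearization step: the exact chi-square divergence is exponential in the squared mean-gap, but the lemma claims a linear bound in $\|\vmu_1-\vmu_2\|$. This works exactly because the minimum with $1$ restricts us to the regime where the linear approximation dominates; outside that regime, the constant branch takes over. Handling the two cases cleanly by case analysis on whether $\|\vmu_1-\vmu_2\|/\sigma^2 \leq 1$ is the cleanest way to formalize the combination.
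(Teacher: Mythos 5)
The paper never proves this lemma---it is imported as a black box from \citet{kakade2020information}---so there is no in-paper proof to compare against; I am judging your argument on its own terms. Your first branch is correct ($\E[g(\vz_2)]\ge 0$ and $\E[g(\vz_1)]\le\sqrt{\E[g^2(\vz_1)]}$), and so are the change of measure, the Cauchy--Schwarz step, and the closed form $\chi^2(p_2\,\|\,p_1)=\exp\bigl(\norm{\vmu_1-\vmu_2}^2/\sigma^2\bigr)-1$. The fatal step is the last one: the inequality $e^{x}-1\le x$ is backwards ($e^x\ge 1+x$ for all $x$), so your second factor satisfies $\sqrt{\chi^2(p_2\,\|\,p_1)}\ge\norm{\vmu_1-\vmu_2}/\sigma$ rather than the reverse. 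No case split repairs this: the regime in which $e^x-1$ admits a linear bound is $x=\norm{\vmu_1-\vmu_2}^2/\sigma^2\le 1$, which is not the same condition as $\norm{\vmu_1-\vmu_2}/\sigma^2\le 1$, and even there the best available bound is $e^x-1\le(e-1)x$, giving $\sqrt{\chi^2}\le\sqrt{e-1}\,\norm{\vmu_1-\vmu_2}/\sigma$. Your route therefore proves $\E[g(\vz_1)]-\E[g(\vz_2)]\le\min\set{\sqrt{e-1}\,\norm{\vmu_1-\vmu_2}/\sigma,\,1}\sqrt{\E[g^2(\vz_1)]}$, with $\sigma$, not $\sigma^2$, in the denominator.

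This is not a defect you can engineer around, because the statement as printed is false for $\sigma>1$: in one dimension take $\sigma=100$, $\vmu_1=0$, $\vmu_2=1000$, and $g$ a (smoothed, strictly positive) indicator of $[-100,100]$; then $\E[g(\vz_1)]-\E[g(\vz_2)]\approx 0.68$ while $\min\set{\norm{\vmu_1-\vmu_2}/\sigma^2,\,1}\sqrt{\E[g^2(\vz_1)]}=0.1\cdot\sqrt{0.68}\approx 0.08$. The right target is the dimensionally consistent variant with $\norm{\vmu_1-\vmu_2}/\sigma$ (up to the constant $\sqrt{e-1}$), which your chi-square argument does establish and which is also what the downstream application in \Cref{corollary: Bound per episode} actually uses (a $T/\sigma$, not $T/\sigma^2$, prefactor). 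I would restate the lemma in that form and keep your proof, replacing the linearization by $e^x-1\le(e-1)x$ on $[0,1]$ and invoking the trivial branch when $\norm{\vmu_1-\vmu_2}>\sigma$.
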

Furthermore, due to \cref{ass: well-calibration} we can also bound the distance between the next state prediction by the true system $\mPhi^*$ and the optimistic system $\mPhi_n$.
\begin{lemma}
\label{lemma: true and hallucinated state difference}
    Let \cref{ass: well-calibration} hold, then we have the following for all $n \geq 0$.
    \begin{align*}
        \norm{\vx_{n, k+1} - \widehat{\vx}_{n, k+1}} \le 2 \sqrt{d_\vx}\beta_{n-1}\norm{\vsigma_{n-1}(\vx_{n, k}, \vpi_n(\vx_{n, k}, t_{n, k}))}
    \end{align*}
\end{lemma}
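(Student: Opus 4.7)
The plan is to exploit the fact that, by construction in the difference lemma, both $\vx_{n,k+1}$ and $\widehat{\vx}_{n,k+1}$ are generated from the \emph{same} augmented state $\vs_{n,k}$, the \emph{same} action $\vpi_n(\vs_{n,k})$, and crucially the \emph{same} noise $\vw_{n,k}$, with the only difference being that the true dynamics $\mPhi^*$ are replaced by the optimistic model $\mPhi_n$. So the noise contribution cancels and the difference reduces to a model mismatch at a single query point.

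Concretely, unrolling the definition of $\mPsi_{\mPhi}$ from \Cref{subsection: Transition Cost}, I would first write
\begin{align*}
\vx_{n,k+1} - \widehat{\vx}_{n,k+1}
   = \mPhi^*_{\vf^*}\bigl(\vx_{n,k}, \vpi_n(\vx_{n,k}, t_{n,k})\bigr)
   - \mPhi_{n,\vf}\bigl(\vx_{n,k}, \vpi_n(\vx_{n,k}, t_{n,k})\bigr),
\end{align*}
where the $\vw_{n,k}^{\vx}$ terms cancel exactly. Then I invoke \Cref{ass: well-calibration}: since both the ground truth $\mPhi^*$ and the optimistic choice $\mPhi_n$ belong to the calibrated set $\setM_{n-1}$, for each component $j \in \{1, \ldots, d_{\vx}\}$ and any $\vz = (\vx_{n,k}, \vpi_n(\vx_{n,k}, t_{n,k}))$ the triangle inequality gives $|\mPhi^*_j(\vz) - \mPhi_{n,j}(\vz)| \le 2\beta_{n-1}\sigma_{n-1, j}(\vz)$.

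Finally, I would aggregate the componentwise bounds into the norm appearing in the statement. Using the trivial bound $\sigma_{n-1, j}(\vz) \le \norm{\vsigma_{n-1}(\vz)}$ for each coordinate and summing in quadrature over the $d_{\vx}$ state components yields the factor $\sqrt{d_{\vx}}$:
\begin{align*}
\norm{\vx_{n,k+1} - \widehat{\vx}_{n,k+1}}
   \le \sqrt{\sum_{j=1}^{d_{\vx}} \bigl(2\beta_{n-1}\sigma_{n-1,j}(\vz)\bigr)^2}
   \le 2\sqrt{d_{\vx}}\,\beta_{n-1}\,\norm{\vsigma_{n-1}(\vz)}.
\end{align*}

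There is essentially no hard step here — the only subtlety is carefully keeping track of the fact that the noise is shared between $\vx_{n,k+1}$ and $\widehat{\vx}_{n,k+1}$ (so it drops out cleanly), and remembering that $\mPhi_n$ lies in $\setM_{n-1}$ because that is exactly what the optimistic planner in \cref{eq: Optimistic planning} optimizes over. The $\sqrt{d_{\vx}}$ factor is a mild looseness from replacing the per-coordinate standard deviation by the norm of the standard-deviation vector, which matches the prior analyses in \cite{curi2020efficient,sukhija2024optimistic} that this lemma is modeled after.
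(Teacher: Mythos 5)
Your proof is correct and follows essentially the same route as the paper's: the shared noise $\vw_{n,k}$ cancels, and the bound follows from $\mPhi^*, \mPhi_n \in \setM_{n-1}$ via the triangle inequality on the calibration condition. The only difference is that you spell out the componentwise aggregation that produces the $\sqrt{d_{\vx}}$ factor, which the paper leaves implicit.
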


\begin{proof}
    \begin{align*}
        \norm{\vx_{n, k+1} - \widehat{\vx}_{n, k+1}} &= 
        \norm{\mPhi^*(\vx_k, \vpi_n(\vx_{n, k}, t_{n, k})) + \vw_{n, k}  - (\mPhi_n(\vx_{n, k}, \vpi_n(\vx_{n, k}, t_{n ,k}) + \vw_{n, k} )} \\ 
        &= \norm{\mPhi^*(\vx_k, \vpi_n(\vx_{n, k}, t_{n, k}))  - \mPhi_n(\vx_{n, k}, \vpi_n(\vx_{n, k}, t_{n ,k})} \\ 
        &\le 2 \sqrt{d_\vx}\beta_{n-1}\norm{\vsigma_{n-1}(\vx_{n, k}, \vpi_n(\vx_{n, k}, t_{n, k}))},
    \end{align*}
    where the last inequality follows from the fact that $\mPhi^*, \mPhi_n \in \setM_{n -1}$
\end{proof}
Next, we relate the regret at each episode to the model epistemic uncertainty using \cref{lemma: Difference lemma} and \cref{lemma: true and hallucinated state difference}.
\begin{corollary}
\label{corollary: Bound per episode}
    Let \cref{assumption: Lipschitz continuity} -- \ref{assumption: Reward and noise model for Transition Cost Setting} and \cref{ass: well-calibration} hold, then we have for all $n \geq 0$ with probability at least $1-\delta$.
    \begin{align}
V_{\vpi_n, \mPhi^*}(\vx_0, T) - V_{\vpi^*, \mPhi^*}(\vx_0, T) \le \frac{2 \sqrt{d_\vx}\beta_{n-1}T}{\sigma}\left(B + \frac{C}{t_{\min}}\right)\E\left[\sum_{k\ge 0}\norm{\vsigma_{n-1}(\vx_{n, k}, \vpi_n(\vx_{n, k}, t_{n, k}))}\right]
    \end{align}
\end{corollary}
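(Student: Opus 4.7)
The plan is to chain together the per-episode regret bound (\Cref{lemma: Per episode regret bound}), the Gaussian expectation-difference lemma (\Cref{lem:absolute_diff_gaussians}), the model deviation bound (\Cref{lemma: true and hallucinated state difference}), and the value-function magnitude bound (\Cref{lemma: Objective upper bound}), in that order.

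First, starting from \Cref{lemma: Per episode regret bound}, it suffices to bound each summand $\E_{\vpi_n,\mPhi^*}\bigl[V_{\vpi_n,\mPhi_n}(\widehat{\vx}_{n,k+1}, t_{n,k+1}) - V_{\vpi_n,\mPhi_n}(\vx_{n,k+1}, t_{n,k+1})\bigr]$ separately. The key observation is that, conditionally on $\vx_{n,k}$, both $\widehat{\vx}_{n,k+1}$ and $\vx_{n,k+1}$ are Gaussian random vectors with the \emph{same} covariance $\sigma^2(\vx_{n,k},\vu_{n,k},t_{n,k})\,I_{d_\vx}$ (by \Cref{assumption: Reward and noise model for Transition Cost Setting}) but different means $\mPhi_n(\vx_{n,k},\vpi_n(\vx_{n,k},t_{n,k}))$ and $\mPhi^*(\vx_{n,k},\vpi_n(\vx_{n,k},t_{n,k}))$ respectively. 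Also note that $t_{n,k+1} = t_{n,k} - \pi_{\setT}(\vx_{n,k},t_{n,k})$ is fully determined by $\vx_{n,k}$ and the policy, so the value function depends only on the state through the random Gaussian argument.

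Second, I would apply \Cref{lem:absolute_diff_gaussians} conditionally on $\vx_{n,k}$, with $g$ taken to be the shifted value function $V_{\vpi_n,\mPhi_n}(\cdot, t_{n,k+1}) + \tfrac{C}{t_{\min}} t_{n,k+1}$, which is non-negative by \Cref{lemma: Objective upper bound}. The shift does not change the difference. This yields the bound
\begin{equation*}
    \frac{\norm{\mPhi^*(\vx_{n,k},\vpi_n(\vx_{n,k},t_{n,k})) - \mPhi_n(\vx_{n,k},\vpi_n(\vx_{n,k},t_{n,k}))}}{\sigma}\,\sqrt{\E[\tilde V^2]},
\end{equation*}
and since $0 \le \tilde V \le (B + C/t_{\min})\,t_{n,k+1} \le (B + C/t_{\min})T$, we get $\sqrt{\E[\tilde V^2]} \le (B + C/t_{\min})T$.

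Third, I would invoke \Cref{lemma: true and hallucinated state difference} to upper bound the mean difference by $2\sqrt{d_\vx}\,\beta_{n-1}\,\norm{\vsigma_{n-1}(\vx_{n,k},\vpi_n(\vx_{n,k},t_{n,k}))}$. Combining the three bounds inside the expectation $\E_{\vpi_n,\mPhi^*}[\cdot]$ and pulling the deterministic factor $\tfrac{2\sqrt{d_\vx}\,\beta_{n-1}T}{\sigma}(B + C/t_{\min})$ outside of the sum yields the claim.

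The main subtlety lies in ensuring that \Cref{lem:absolute_diff_gaussians} is applied to a non-negative integrand despite the transition cost making the raw value function possibly negative; the shift by $CT/t_{\min}$ fixes this cleanly while preserving the difference. The remaining steps are direct applications of the quoted lemmas.
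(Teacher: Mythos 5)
Your proposal is correct and follows essentially the same route as the paper's proof: invoke \Cref{lemma: Per episode regret bound}, shift the value function by a constant multiple of $C/t_{\min}$ so that \Cref{lem:absolute_diff_gaussians} applies to a non-negative $g$ bounded by $(B + C/t_{\min})T$ via \Cref{lemma: Objective upper bound}, and then control the mean gap with \Cref{lemma: true and hallucinated state difference}. The only (immaterial) difference is that you shift by $\tfrac{C}{t_{\min}}t_{n,k+1}$ where the paper shifts by $\tfrac{C}{t_{\min}}T$; both yield the same final constant.
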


\begin{proof}
    From \Cref{lemma: Per episode regret bound} we have:
    \begin{align*}
        V_{\vpi_n, \mPhi^*}(\vx_0, T) - V_{\vpi^*, \mPhi^*}(\vx_0, T) \le \E \left[\sum_{k \ge 0} V_{\vpi_n, \mPhi_n}(\vx_{n, k+1}, t_{n, k+1}) - V_{\vpi_n, \mPhi_n}(\widehat{\vx}_{n, k+1}, t_{n, k+1})\right].
    \end{align*}
    \Cref{lem:absolute_diff_gaussians} can be applied to positive function $g$. We hence make a transformation and apply it to $g(\cdot) = V_{\vpi_n, \mPhi_n}(\cdot, t_{n, k+1}) + \frac{C}{t_{\min}}T$, which is positive due to \Cref{lemma: Objective upper bound}. Moreover, $\forall \vx \in \setX$; 
    \begin{equation*}
        g(\cdot) = V_{\vpi_n, \mPhi_n}(\cdot, t_{n, k+1}) + \frac{C}{t_{\min}}T \leq B t_{n, k+1} + \frac{C}{t_{\min}}T \leq T(B + \frac{C}{t_{\min}}).
    \end{equation*}
    
    Applying \Cref{lem:absolute_diff_gaussians} we obtain:
        \begin{align*}
V_{\vpi_n, \mPhi_n}(\vx_{n, k+1}, t_{n, k+1}) - V_{\vpi_n, \mPhi_n}(\widehat{\vx}_{n, k+1}, t_{n, k+1}) \le \frac{T}{\sigma}\left(B + \frac{C}{t_{\min}}\right)\E\left[\norm{\vx_{n, k+1} - \widehat{\vx}_{n, k+1}}\right]
    \end{align*}
    Finally, applying \Cref{lemma: true and hallucinated state difference} we arrive at: 
        \begin{align*}
V_{\vpi_n, \mPhi^*}(\vx_0, T) - V_{\vpi^*, \mPhi^*}(\vx_0, T) \le \frac{2 \sqrt{d_\vx}\beta_{n-1}T}{\sigma}\left(B + \frac{C}{t_{\min}}\right)\E\left[\sum_{k\ge 0}\norm{\vsigma_{n-1}(\vx_{n, k}, \vpi_n(\vx_{n, k}, t_{n, k}))}\right]
    \end{align*}
\end{proof}
Now we can prove our regret bound for the transition cost case.
\begin{theorem}
    Let \cref{assumption: Lipschitz continuity} -- \ref{assumption: Reward and noise model for Transition Cost Setting} and \cref{ass: well-calibration} hold, then we have for all $n \geq 0$ with probability at least $1-\delta$.
    \begin{align*}
    \label{eq:regret_bound_2}
         R_N &= \sum_{n=1}^NV_{\vpi_n, \mPhi^*}(\vx_0, T) - V_{\vpi^*, \mPhi^*}(\vx_0, T) \\
         &\le \frac{2 \sqrt{d_\vx}\beta_{N-1}T^{3/2}}{\sigma^2t_{\min}}\left(B + \frac{C}{t_{\min}}\right)\sqrt{N\setI_N}
    \end{align*}
\end{theorem}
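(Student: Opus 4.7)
The plan is to build directly on \Cref{corollary: Bound per episode}, which already gives a per-episode regret bound in terms of the posterior standard deviations $\vsigma_{n-1}$ along the trajectory induced by $\vpi_n$ on the true dynamics $\mPhi^*$. The remaining work is essentially summation over episodes $n = 1, \ldots, N$ together with two applications of Cauchy--Schwarz to trade the sum of norms of $\vsigma_{n-1}$ for the information-gain quantity $\setI_N$.

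First, I would sum \Cref{corollary: Bound per episode} over $n$, pulling $\frac{2\sqrt{d_\vx}T}{\sigma}(B + C/t_{\min})$ out of the sum and using monotonicity of $\beta_n$ to upper bound every $\beta_{n-1}$ by $\beta_{N-1}$. This gives
\begin{equation*}
R_N \le \frac{2\sqrt{d_\vx}\,\beta_{N-1}T}{\sigma}\!\left(B + \frac{C}{t_{\min}}\right)\sum_{n=1}^{N}\E\!\left[\sum_{k\ge 0}\norm{\vsigma_{n-1}(\vx_{n,k}, \vpi_n(\vx_{n,k}, t_{n,k}))}\right].
\end{equation*}

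Next, I would handle the inner sum over interactions within an episode. Since each duration satisfies $\pi_{\setT}(\cdot) \ge t_{\min}$, the number of interactions per episode is bounded by $K_n \le T/t_{\min}$. Applying Cauchy--Schwarz to the inner sum (indexed by $k$) yields
\begin{equation*}
\sum_{k=0}^{K_n - 1}\norm{\vsigma_{n-1}(\vz_{n,k})} \le \sqrt{K_n}\sqrt{\sum_{k=0}^{K_n - 1}\norm{\vsigma_{n-1}(\vz_{n,k})}^2} \le \sqrt{\tfrac{T}{t_{\min}}}\sqrt{\sum_{k=0}^{K_n - 1}\norm{\vsigma_{n-1}(\vz_{n,k})}^2},
\end{equation*}
where $\vz_{n,k}$ abbreviates the query point. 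Then a second Cauchy--Schwarz applied to the outer sum over episodes $n$ transforms $\sum_n \sqrt{\cdot}$ into $\sqrt{N}\sqrt{\sum_n \sum_k \norm{\vsigma_{n-1}}^2}$, and by \Cref{definition: well-calibrated model} together with the definition of $\setI_N$ (which takes the maximum of this double sum over datasets of size $N$) this last quantity is at most $\sqrt{\setI_N}$. Combining the two applications gives a factor $\sqrt{T/t_{\min}}\sqrt{N\setI_N}$.

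Putting everything together produces
\begin{equation*}
R_N \le \frac{2\sqrt{d_\vx}\,\beta_{N-1}T}{\sigma}\!\left(B + \frac{C}{t_{\min}}\right)\sqrt{\tfrac{T}{t_{\min}}}\sqrt{N\setI_N} = \frac{2\sqrt{d_\vx}\,\beta_{N-1}T^{3/2}}{\sigma\sqrt{t_{\min}}}\!\left(B + \frac{C}{t_{\min}}\right)\sqrt{N\setI_N},
\end{equation*}
which matches the stated bound up to the constant factor in $\sigma$ and $t_{\min}$ (the paper writes $\sigma^2 t_{\min}$ in the denominator, and I expect a careful tracking of constants to align the exponents). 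The main obstacle will be the somewhat delicate exchange of expectation and Cauchy--Schwarz: because the trajectory is random and the number of steps $K_n$ itself is a random variable depending on the policy, I would apply Jensen's inequality to move the expectation inside the $\sqrt{\cdot}$, then use the deterministic upper bound $K_n \le T/t_{\min}$ to pull that factor out cleanly. Beyond this, the derivation is mostly bookkeeping; the use of \Cref{ass: well-calibration} only enters through \Cref{corollary: Bound per episode} and through the definition of $\setI_N$.
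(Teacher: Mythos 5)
Your proposal is correct and follows essentially the same route as the paper's proof: apply \Cref{corollary: Bound per episode} per episode, pull out $\beta_{N-1}$ by monotonicity, bound the number of interactions per episode by $T/t_{\min}$, apply Cauchy--Schwarz (your two nested applications are equivalent to the paper's single application to the double sum), and absorb the sum of squared posterior deviations into $\setI_N$. Your observation about the denominator is also consistent with the paper: its own proof likewise arrives at $\sigma\sqrt{t_{\min}}$, so the $\sigma^2 t_{\min}$ in the theorem statement appears to be a typo rather than a gap in your argument.
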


\begin{proof}
We compute:
    \begin{align*}
        R_N &= \sum_{n=1}^NV_{\vpi_n, \mPhi^*}(\vx_0, T) - V_{\vpi^*, \mPhi^*}(\vx_0, T) \\
        &\le \frac{2 \sqrt{d_\vx}T}{\sigma}\left(B + \frac{C}{t_{\min}}\right)\sum_{n=1}^N\beta_{n-1}\E\left[\sum_{k\ge 0}\norm{\vsigma_{n-1}(\vx_{n, k}, \vpi_n(\vx_{n, k}, t_{n, k}))}\right] \\
        &\le \frac{2 \sqrt{d_\vx}\beta_{N-1}T}{\sigma}\left(B + \frac{C}{t_{\min}}\right)\E\left[\sum_{n=1}^N\sum_{k\ge 0}\norm{\vsigma_{n-1}(\vx_{n, k}, \vpi_n(\vx_{n, k}, t_{n, k}))}\right] \\
        &\le \frac{2 \sqrt{d_\vx}\beta_{N-1}T}{\sigma}\left(B + \frac{C}{t_{\min}}\right)\sqrt{\frac{TN}{t_{\min}}}\E\left[\sqrt{\sum_{n=1}^N\sum_{k\ge 0}\norm{\vsigma_{n-1}(\vx_{n, k}, \vpi_n(\vx_{n, k}, t_{n, k}))}^2}\right] \\
        &\le \frac{2 \sqrt{d_\vx}\beta_{N-1}T^{3/2}}{\sigma \sqrt{t_{\min}}}\left(B + \frac{C}{t_{\min}}\right)\sqrt{N\setI_N} \\
    \end{align*}
    Here the first inequality follows because of \Cref{corollary: Bound per episode}, the second inequality follows due to the monotonicity of sequence $(\beta_n)_{n\ge 0}$, the third inequality follows by Cauchy–Schwarz and the last one by maximizing the term in expectation.
\end{proof}
Our regret $R_N$ is sublinear if $\beta_{N-1}\sqrt{N\setI_N}$ is sublinear. For general well-calibrated models this is tough to verify. However, for Gaussian process dynamics, $\setI_N$ is equal to (up to constant factors) the maximum information gain $\gamma_N$~\citep{srinivas2009gaussian} (c.f.,~\citet[Lemma 17]{curi2020efficient}). The maximum information gain is sublinear for a rich class of kernels~\citep{vakili2021information}, i.e., yielding sublinear regret for \ouralgorithmmodelbased (see \citet[Theorem 2]{sukhija2024optimistic} for more detail).

\subsection{Bounded number of transition}
\label{subsection: Proof for Bounded number of switches}

We overload the notation in this section and add number of switches to the value function, such that we have $V_{\vpi_n, \mPhi^*}(\vx_0, T, 0) = V_{\vpi_n, \mPhi^*}(\vx_0, T)$

\begin{lemma}[Per episode regret bound]
    We have:
    \begin{align*}
         V_{\vpi_n, \mPhi^*}(\vx_0, T, 0) &- V_{\vpi^*, \mPhi^*}(\vx_0, T, 0) \le \\ &\le \E \left[\sum_{k = 0}^{K - 1} V_{\vpi_n, \mPhi_n}(\vx_{n, k+1}, t_{n, k+1}, k+1) - V_{\vpi_n, \mPhi_n}(\widehat{\vx}_{n, k+1}, t_{n, k+1}, k+1)\right],
    \end{align*}
    where $\widehat{\vx}_{n, k+1}$ is the state of one step hallucinated component $\widehat{\vs}_{n, k+1} =  \mPsi_{\mPhi_n}(\vs_{n, k}, \vpi_n(\vs_{n, k}), \vw_{n, k})$ and $\vx_{n, k+1}$ is the state of $\vs_{n, k+1} =  \mPsi_{\mPhi^*}(\vs_{n, k}, \vpi_n(\vs_{n, k}), \vw_{n, k})$.
\end{lemma}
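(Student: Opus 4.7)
The plan is to follow the same blueprint as the proof of the interaction-cost version of \cref{lemma: Per episode regret bound}. The two ingredients we need are (i) an analog of the Difference Lemma (\cref{lemma: Difference lemma}) for the bounded-transitions MDP from \cref{eq: bounded switches reforumalted}, and (ii) the optimism inequality implied by \cref{ass: well-calibration} together with the optimistic planning rule \cref{eq: Optimistic planning} with policy class $\Pi_{BT}$.

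First I would re-derive an augmented-state Difference Lemma. The augmented state is now $\vs = (\vx, b, t, k)$, and the counter coordinate $k$ is updated deterministically by $k \mapsto k+1$, so it does not couple to the stochastic transition of $\vx$. Peeling off one step of the rollout under $(\vpi, \mPhi^*)$ and adding and subtracting $V_{\vpi, \mPhi_n}(\widehat{\vx}_{k+1}, t_{k+1}, k+1)$, exactly as in the proof of \cref{lemma: Difference lemma}, the inductive telescoping yields
\begin{equation*}
    V_{\vpi, \mPhi'}(\vx_0, T, 0) - V_{\vpi, \mPhi^*}(\vx_0, T, 0) = \E_{\vpi, \mPhi^*}\!\left[\sum_{k=0}^{K-1} V_{\vpi, \mPhi'}(\widehat{\vx}_{k+1}, t_{k+1}, k+1) - V_{\vpi, \mPhi'}(\vx_{k+1}, t_{k+1}, k+1)\right].
\end{equation*}
The sum truncates at $K-1$ because after exhausting the switch budget the policy can take no further actions, so $V_{\vpi, \mPhi'}(\cdot, \cdot, K) \equiv 0$ and the corresponding boundary term vanishes.

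Second, I would invoke the optimism step. By \cref{ass: well-calibration}, $\mPhi^* \in \setM_{n-1}$ with probability at least $1-\delta$, so the pair $(\vpi^*, \mPhi^*)$ is feasible for the optimistic planner in \cref{eq: Optimistic planning}. Reading that program with the intended optimistic orientation gives $V_{\vpi_n, \mPhi_n}(\vx_0, T, 0) \ge V_{\vpi^*, \mPhi^*}(\vx_0, T, 0)$, hence
\begin{equation*}
    V_{\vpi_n, \mPhi^*}(\vx_0, T, 0) - V_{\vpi^*, \mPhi^*}(\vx_0, T, 0) \le V_{\vpi_n, \mPhi^*}(\vx_0, T, 0) - V_{\vpi_n, \mPhi_n}(\vx_0, T, 0).
\end{equation*}
Applying the augmented Difference Lemma to the right-hand side with $\vpi = \vpi_n$ and $\mPhi' = \mPhi_n$ immediately yields the claimed bound.

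The proof is essentially a mechanical adaptation, and the only subtle point I foresee is handling the random termination structure: different sample paths can reach the $K$-th switch at different physical times, so the telescoping must be justified pathwise. I would address this by extending $V_{\vpi_n, \mPhi_n}(\cdot, \cdot, k)$ to be identically zero for $k \ge K$ (or equivalently whenever the remaining horizon is exhausted), which makes every term in the Difference Lemma's inductive step well-defined and lets me peel off exactly $K$ steps deterministically, giving a cleaner argument than in the interaction-cost setting where the number of steps itself is random.
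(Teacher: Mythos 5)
Your proof is correct and takes essentially the same route as the paper's: the paper likewise peels off one step of the rollout, adds and subtracts the optimistic value at the hallucinated successor, telescopes over the $K$ (here deterministic) steps, and closes the recursion with $V_{\vpi_n, \mPhi^*}(\vx_{n,K}, t_{n,K}, K) = 0$. The only difference is that you state the optimism inequality $V_{\vpi_n, \mPhi_n}(\vx_0, T, 0) \ge V_{\vpi^*, \mPhi^*}(\vx_0, T, 0)$ explicitly, whereas the paper's proof derives the identity for $V_{\vpi_n, \mPhi^*} - V_{\vpi_n, \mPhi_n}$ and leaves that final substitution implicit.
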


\begin{proof}
    \begin{align*}
        V_{\vpi_n, \mPhi^*}(\vx_{0}, T, 0) &= \E\left[\sum_{k \ge 0}r(\vs_{n, k}, \vpi_n(\vs_{n, k}))\right]  = \E\left[r(\vs_{n, 0}, \vpi_n(\vs_{n, 0})) + \sum_{k \ge 1}r(\vs_{n, k}, \vpi_n(\vs_{n, k}))\right] \\
        &= \E\left[r(\vs_{n, k}, \vpi_n(\vs_{n, 0})) + V_{\vpi_n, \mPhi^*}(\vx_{{n, 1}}, t_{n, 1}, 1)\right] \\
        &= \E\left[r(\vs_{n, k}, \vpi_n(\vs_{n, 0})) + V_{\vpi_n, \mPhi_n}(\vx_{{n, 1}}, t_{n, 1}, 1) - V_{\vpi_n, \mPhi_n}(\vx_{0}, T, 0) \right] + \\ &+\E\left[V_{\vpi_n, \mPhi_n}(\vx_{0}, T, 0)- V_{\vpi_n, \mPhi_n}(\vx_{{n, 1}}, t_{n, 1}, 1) + V_{\vpi_n, \mPhi^*}(\vx_{{n, 1}}, t_{n, 1}, 1)\right] \\
        &= V_{\vpi_n, \mPhi_n}(\vx_{0}, T, 0) + \E\left[V_{\vpi_n, \mPhi_n}(\widehat\vx_{{n, 1}}, t_{n, 1}, 1) - V_{\vpi_n, \mPhi_n}(\vx_{{n, 1}}, t_{n, 1}, 1) \right] \\
        & + \E\left[V_{\vpi_n, \mPhi^*}(\vx_{{n, 1}}, t_{n, 1}, 1) - V_{\vpi_n, \mPhi_n}(\vx_{{n, 1}}, t_{n, 1}, 1)\right]
    \end{align*}
Hence we have:
\begin{align*}
    &V_{\vpi_n, \mPhi^*}(\vx_{0}, T, 0) - V_{\vpi_n, \mPhi_n}(\vx_{0}, T, 0) = \\
    & = \E\left[V_{\vpi_n, \mPhi_n}(\widehat\vx_{{n, 1}}, t_{n, 1}, 1) - V_{\vpi_n, \mPhi_n}(\vx_{{n, 1}}, t_{n, 1}, 1) \right]  + \E\left[V_{\vpi_n, \mPhi^*}(\vx_{{n, 1}}, t_{n, 1}, 1) - V_{\vpi_n, \mPhi_n}(\vx_{{n, 1}}, t_{n, 1}, 1)\right]
\end{align*}
Repeating the step inductively the result follows and using $V_{\vpi_n, \mPhi^*}(\vx_{{n, K}}, t_{n, K}, K) = 0$ we prove the lemma.
\end{proof}

\subsubsection{Subgaussianity of the noise}

In principle, we could assume that the noise $\vw_k$ is Gaussian and then with the same analysis obtain the regret bound. However, stochastic flows are in many cases not exactly Gaussian but only sub-Gaussian. For such noise we need can not apply \Cref{lem:absolute_diff_gaussians} and need to escort to different analysis. First we show that under mild assumptions on the SDE dynamics functions $\vf^*$ and $\vg^*$ the resulting noise $\vw_k$ is sub-Gaussian.

To derive this result we will follow the work of \citet{djellout2004transportation}. We present the results in quite informal way, for more rigorous statements we refer the reader to \citet{djellout2004transportation}.

\begin{definition}[Wasserstein distance]
    Let $(\setE, d_{\setE})$ be a metric space and let $\mu, \nu$ be two probability measures on $\setE$. We define:
    \begin{align*}
        W_p(\mu, \nu) = \inf_{\gamma \in \Gamma(\mu, \nu)} \E_{(x, y)\sim \gamma}\left[d(x, y)^p\right]^\frac{1}{p}
    \end{align*}
\end{definition}

\begin{definition}[Kullback–Leibler divergence]
    Let $(\setE, d_{\setE})$ be a metric space and let $\mu, \nu$ be two probability measures on $\setE$. We define:
    \begin{align*}
        H(\nu||\mu) = \begin{cases}
            \E_{x \sim \nu}\left[\log\left(\frac{d\nu(x)}{d\mu(x)}\right)\right] , &\text{if } \nu \ll \mu \\
            +\infty, &\text{else}
        \end{cases}
    \end{align*}
\end{definition}

\begin{definition}[$L^p$-transportation cost information inequality]
Let $(\setE, d_{\setE})$ be a metric space and let $\mu$ be a probability measure on $\setE$. We say that $\mu$ satisfy the $L^p$-transportation cost information inequality, and for short write $\mu \in T_{p}(C)$, if there exists a constant $C$ such that for any measure $\nu$ on $\setE$ we have:
\begin{align*}
    W_p(\mu, \nu) \le \sqrt{2CH(\nu||\mu)}.
\end{align*}
\end{definition}

We now state an important theroem of \citet{bobkov1999exponential} that we will use later.

\begin{theorem}[From \citet{bobkov1999exponential}] 
\label{theorem:Measure concentration for T1}
    Let $(\setE, d_{\setE})$ be a metric space and let $\mu$ be a probability measure on $\setE$. We have that $\mu \in T_{1}(C)$ if and only if for any $\mu$-integrable and $L_{F}$-Lipschitz function $F: (\setE, d_{\setE}) \to \R$ and for any $\lambda \in \R$ we have:
    \begin{align*}
        \E_{x\sim \mu}\left[e^{\lambda\left(F(x) - \E_{x\sim \mu}[F(x)]\right)}\right] \le e^{\frac{\lambda^2}{2}CL_{F}^2}
    \end{align*}
\end{theorem}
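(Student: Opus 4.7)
The plan is to prove the equivalence by simultaneously dualizing both sides: the $W_1$ side via Kantorovich--Rubinstein duality, $W_1(\mu, \nu) = \sup_{F : \|F\|_{\mathrm{Lip}} \le 1}\{\E_\nu F - \E_\mu F\}$, and the entropy side via the Donsker--Varadhan variational formula, $\log\E_\mu[e^{\lambda G}] = \sup_{\nu\ll\mu}\{\lambda\E_\nu G - H(\nu||\mu)\}$. Each direction of the \emph{iff} corresponds to applying one duality to the hypothesis and reading off the conclusion via the other; in both cases the algebraic core reduces to a single one-variable optimization (a Legendre transform) either in $\lambda>0$ or in $h := H(\nu||\mu)\ge 0$.

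For the direction $T_1(C)\Rightarrow$ sub-Gaussian concentration, assume $W_1(\mu,\nu)\le \sqrt{2CH(\nu||\mu)}$. Fix an $L_F$-Lipschitz $F$, set $G := F - \E_\mu F$, and take $\lambda>0$ (the case $\lambda<0$ follows by applying the bound to $-F$). Kantorovich--Rubinstein gives $\E_\nu F - \E_\mu F \le L_F W_1(\mu,\nu) \le L_F\sqrt{2CH(\nu||\mu)}$ for every $\nu$. Substituting into Donsker--Varadhan and maximizing over $h \ge 0$,
\begin{align*}
\log\E_\mu[e^{\lambda G}] = \sup_\nu\{\lambda \E_\nu G - H(\nu||\mu)\} \le \sup_{h\ge 0}\{\lambda L_F\sqrt{2Ch} - h\} = \tfrac{1}{2}\lambda^2 C L_F^2,
\end{align*}
where the inner supremum is attained at $h^\star = \lambda^2 L_F^2 C/2$. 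This is exactly the desired moment bound.

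For the converse, assume the Gaussian-moment bound and fix a $1$-Lipschitz $F$. Applying Donsker--Varadhan and the hypothesis to $G = F - \E_\mu F$,
\begin{align*}
\lambda(\E_\nu F - \E_\mu F) - H(\nu||\mu) \le \log\E_\mu[e^{\lambda G}] \le \tfrac{1}{2}\lambda^2 C, \qquad \forall \lambda>0,
\end{align*}
so $\E_\nu F - \E_\mu F \le \lambda C/2 + H(\nu||\mu)/\lambda$. Minimizing over $\lambda$ at $\lambda^\star=\sqrt{2H(\nu||\mu)/C}$ yields $\E_\nu F - \E_\mu F \le \sqrt{2CH(\nu||\mu)}$. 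Taking the supremum over all $1$-Lipschitz $F$ and invoking Kantorovich--Rubinstein gives $W_1(\mu,\nu) \le \sqrt{2CH(\nu||\mu)}$, i.e., $\mu \in T_1(C)$.

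The main obstacle is measure-theoretic rather than algebraic: Kantorovich--Rubinstein requires the Lipschitz test function to be integrable under both $\mu$ and $\nu$, while Donsker--Varadhan requires $e^{\lambda G}$ to be $\mu$-integrable and $G$ to be $\nu$-integrable, neither of which is automatic on an unbounded $\setE$. The standard fix is to truncate $F$ to $F_M := (-M)\vee F\wedge M$, apply the equivalence for each bounded Lipschitz $F_M$, and pass $M\to\infty$, using the sub-Gaussian tail (forward direction) or the finite second moment implied by $T_1$ (converse direction) to dominate and obtain uniform integrability. Handling these limits carefully is the only delicate part; the two Legendre transforms above are entirely routine.
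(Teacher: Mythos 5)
The paper does not prove this statement at all: it is imported verbatim as a known result of Bobkov and G\"otze (1999), so there is no in-paper argument to compare against. Your proof is the canonical dual-variational argument for that theorem, and it is correct: Kantorovich--Rubinstein turns the $W_1$ side into a statement about $1$-Lipschitz test functions, Donsker--Varadhan turns the log-moment-generating function into a supremum over $\nu \ll \mu$ of $\lambda \E_\nu G - H(\nu\|\mu)$, and the two directions of the equivalence are exactly the two Legendre transforms you compute (both optimizations check out: $\sup_{h\ge 0}\{\lambda L_F\sqrt{2Ch}-h\} = \tfrac{1}{2}\lambda^2 C L_F^2$ and $\inf_{\lambda>0}\{\lambda C/2 + H/\lambda\} = \sqrt{2CH}$). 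You also correctly identify that the only delicate point is integrability on an unbounded space and that truncation plus a limiting argument repairs it; note that the theorem as stated already restricts to $\mu$-integrable $F$, which removes part of that burden. One small imprecision: in the converse direction you cannot invoke ``the finite second moment implied by $T_1$,'' since $T_1$ is the conclusion there, not a hypothesis --- the moments you need come instead from applying the assumed Gaussian bound to bounded truncations of $d(\cdot,x_0)$ and passing to the limit. This does not affect the validity of the overall argument.
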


Next, we provide a condition under which $\mXi(\vx, \vu, t)$ is sub-Gaussian random variable for any $t \in \setT$.

\begin{corollary}[Adjusted Corollary 4.1 of \citet{djellout2004transportation}] Assume
\begin{align*}
    \sup_{\substack{\vx \in \R^{d_x} \\ \vu \in \R^{d_u}}} \norm{\vg^*(\vx, \vu)}_F \le A , \quad
    \norm{\vf^{*}(\vx, \vu) - \vf^{*}(\widehat{\vx}, \widehat{\vu})} \le L_{\vf^*} \norm{(\vx, \vu) - (\widehat{\vx}, \widehat{\vu})},
\end{align*}
and denote the law of $(\mXi(\vx, \vu, t))_{t \in \setT}$ on the space $C(\setT, \R^{d_\vx})$ (space of continuous functions from $\setT$ to $\R^{d_{\vx}}$) by $\Prob_\vx$. Then, there exist a constant $C = C(A, L_{\vf^*}, T)$ such that $\Prob_\vx \in T_1(C)$ on the space $C(\setT, \R^{d_{\vx}})$ equipped with the metric:
\begin{align*}
    d(\gamma_1, \gamma_2) = \sup_{t \in [0, T]}\norm{\gamma_1(t) - \gamma_2(t)}
\end{align*}
\end{corollary}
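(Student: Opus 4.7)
The plan is to invoke the Bobkov-Götze characterization (\Cref{theorem:Measure concentration for T1}), reducing the verification of $\Prob_\vx \in T_1(C)$ on $(C(\setT, \R^{d_\vx}), d)$ to establishing, for every $L_F$-Lipschitz functional $F: C(\setT, \R^{d_\vx}) \to \R$, the sub-Gaussian bound $\E_{\Prob_\vx}[e^{\lambda(F - \E_{\Prob_\vx}[F])}] \le \exp(\lambda^2 C L_F^2 / 2)$ for some uniform $C = C(A, L_{\vf^*}, T)$. Equivalently, by Kantorovich-Rubinstein duality for $W_1$, it suffices to exhibit, for every $\nu \ll \Prob_\vx$ with finite relative entropy, a coupling $(X, Y)$ of $\nu$ and $\Prob_\vx$ satisfying $\E[\sup_{t \in \setT}\norm{X_t - Y_t}] \le \sqrt{2 C H(\nu \,\|\, \Prob_\vx)}$. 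I would pursue the direct coupling route, since it exploits the SDE structure transparently.

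The coupling comes from Girsanov's theorem: finite entropy combined with absolute continuity yields an adapted $\psi$ such that $d\nu/d\Prob_\vx = \mathcal{E}_T(\psi)$ and $H(\nu \,\|\, \Prob_\vx) = \tfrac{1}{2}\E_\nu\int_0^T \norm{\psi_s}^2 ds$. Under $\nu$, $\tilde{\mB}_t := \mB_t - \int_0^t \psi_s\, ds$ is a standard Brownian motion; letting $Y$ be the pathwise-unique solution of the original SDE driven by $\tilde{\mB}$ with $Y_0 = \vx$, the pair $(X, Y)$ under $\nu$ couples $\nu$ and $\Prob_\vx$, and
\begin{equation*}
    X_t - Y_t = \int_0^t[\vf^*(X_s, \vu) - \vf^*(Y_s, \vu)]\, ds + \int_0^t[\vg^*(X_s, \vu) - \vg^*(Y_s, \vu)]\, d\tilde{\mB}_s + \int_0^t \vg^*(X_s, \vu)\,\psi_s\, ds.
\end{equation*}
Taking supremum and expectation under $\nu$, the drift term contributes $L_{\vf^*}\int_0^t \E_\nu[\sup_{r \le s}\norm{X_r - Y_r}]\, ds$ by Lipschitz continuity of $\vf^*$, the pure perturbation term is bounded by $A\,\E_\nu\int_0^T \norm{\psi_s}\, ds$ via $\norm{\vg^*}_F \le A$, and the martingale term is controlled through the Burkholder-Davis-Gundy inequality. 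A stochastic Gronwall argument then delivers $\E_\nu[\sup_{t \in \setT}\norm{X_t - Y_t}] \le K(A, L_{\vf^*}, T)\sqrt{\E_\nu\int_0^T \norm{\psi_s}^2 ds}$, and inserting the entropy identity gives $W_1(\nu, \Prob_\vx) \le K\sqrt{2 H(\nu \,\|\, \Prob_\vx)}$, i.e.\ $T_1(C)$ with $C = K^2$.

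The main obstacle will be the martingale term $\int_0^t[\vg^*(X_s, \vu) - \vg^*(Y_s, \vu)]\, d\tilde{\mB}_s$ under only the boundedness hypothesis on $\vg^*$: BDG bounds its supremum in terms of $\E_\nu[\int_0^T \norm{\vg^*(X_s) - \vg^*(Y_s)}_F^2 ds]^{1/2}$, which is not directly majorized by a multiple of $\norm{X - Y}$ in the absence of Lipschitz continuity of $\vg^*$. Following Djellout-Guillin-Wu, one bypasses this either by imposing a Lipschitz or mollified-Lipschitz surrogate on $\vg^*$, or by passing through a weaker $L^2$-Gronwall that absorbs the martingale quadratic variation into the left-hand side via $\norm{\vg^*(X_s) - \vg^*(Y_s)}_F \le 2A$; either way, the BDG constant, the uniform bound $A$, and the Gronwall factor $e^{L_{\vf^*} T}$ combine into the advertised constant $C = C(A, L_{\vf^*}, T)$.
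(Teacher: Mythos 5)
The paper does not actually prove this corollary: it is imported (with notation adjusted) from \citet{djellout2004transportation}, so there is no in-paper argument to compare against. Judged on its own terms, your Girsanov-coupling sketch is the standard proof of the \emph{stronger} $T_2$ inequality on path space, and it only goes through under a hypothesis the corollary does not grant you, namely Lipschitz continuity of $\vg^*$. You correctly flag the martingale term $\int_0^t[\vg^*(X_s,\vu)-\vg^*(Y_s,\vu)]\,d\tilde{\mB}_s$ as the obstacle, but neither of your proposed escapes closes the gap. Bounding $\norm{\vg^*(X_s,\vu)-\vg^*(Y_s,\vu)}_F \le 2A$ and applying BDG yields an additive term of order $A\sqrt{T}$ that is independent of $H(\nu\,\|\,\Prob_\vx)$, so the resulting estimate has the form $W_1 \le K_1\sqrt{H} + K_2$ rather than $W_1 \le \sqrt{2CH}$; that is not a transportation inequality. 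Imposing a Lipschitz (or mollified-Lipschitz) surrogate on $\vg^*$ would make the Gr\"onwall argument close, but then the constant would depend on $L_{\vg^*}$, contradicting the claim $C = C(A, L_{\vf^*}, T)$ --- and note the corollary deliberately assumes only boundedness of $\vg^*$ and Lipschitzness of $\vf^*$.

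The route that matches both the hypotheses and the claimed constant --- and the one used in \citet{djellout2004transportation} --- is not a coupling argument but the Gaussian-integrability characterization of $T_1$: a measure $\mu$ on a metric space satisfies $T_1(C)$ if and only if $\int e^{\delta d(x,x_0)^2}\,d\mu(x) < \infty$ for some $\delta>0$, with $C$ explicit in terms of $\delta$ and the value of that integral. One then verifies $\E\bigl[\exp\bigl(\delta \sup_{t\le T}\norm{\mXi(\vx,\vu,t)-\vx}^2\bigr)\bigr]<\infty$ directly from the SDE: the drift contribution is controlled by the linear growth of $\vf^*$ (implied by its Lipschitzness) together with Gr\"onwall, and the martingale contribution $\sup_{t\le T}\norm{\int_0^t \vg^*\,d\mB_s}$ has Gaussian tails because its quadratic variation is bounded by $A^2T$ (time-change or exponential-martingale argument). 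No regularity of $\vg^*$ beyond boundedness enters, which is precisely why $C$ depends only on $A$, $L_{\vf^*}$ and $T$. If you prefer to keep your coupling proof, you may run it under the paper's global \Cref{assumption: Lipschitz continuity}, which does assume $\vg^*$ Lipschitz, but you should then state explicitly that your constant also depends on $L_{\vg^*}$ (and you would in fact be proving $T_2$, which is more than the corollary asserts).
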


Lets $\ve$ be a(ny) unit vector in $\R^{d_x}$ and define:
\begin{align*}
    &F_{\ve, t}: C(\setT, \R^{d_{\vx}}) \to \R \\
    &F_{\ve, t}: \gamma \mapsto \gamma(t)^\top\ve
\end{align*}
We have:
\begin{align*}
    \abs{F_{\ve, t}(\gamma_1) - F_{\ve, t}(\gamma_2)} &= \abs{(\gamma_1(t) - \gamma_2(t))^\top\ve} \\
    &\le \norm{\gamma_1(t) - \gamma_2(t)} \norm{e} = \norm{\gamma_1(t) - \gamma_2(t)} \\
    &\le \sup_{t \in \setT}\norm{\gamma_1(t) - \gamma_2(t)} = d(\gamma_1, \gamma_2)
\end{align*}
Therefore for any $\ve, t$ the function $F_{\ve, t}$ is $1$--Lipschitz. Since we have
\begin{align*}
     \mathbb{E}[\abs{F_{\ve, t}(\gamma)}] = \int_{C(\setT, \R^{d_{\vx}})} |\gamma(t)| \ d\mathbb{P}_\vx(\gamma) = \mathbb{E}[\abs{\mXi(\vx, \vu, t)^\top\ve}] < \infty
\end{align*}
the function $F_{\ve, t}$ is also $\Prob_\vx$-integrable. Combining the latter observation with the \Cref{theorem:Measure concentration for T1} we obtain that for any $\ve \in \R^{d_{\vx}}$ and any $t \in \setT$ we have:
\begin{align*}
    \E_{\mXi(\vx, \vu, t)}\left[e^{\lambda(\mXi(\vx, \vu, t)^\top\ve - \E[\mXi(\vx, \vu, t)^\top\ve])}\right] =         \E_{\gamma\sim \Prob_\vx}\left[e^{\lambda\left(F_{\ve, t}(\gamma) - \E_{\gamma \sim \Prob_\vx}[F_{\ve, t}(\gamma)]\right)}\right] \le e^{\frac{\lambda^2}{2}C}
\end{align*}
Hence under the assumption of \Cref{theorem: sublinear regret for wtcorl} for Bounded number of switches setting we have that for any $t \in \setT $ the random variable $\mXi(\vx, \vu, t) - \E\left[\mXi(\vx, \vu, t)\right] \sim \subGaussian{C}$. The variance proxy $C$ depends on $A, L_{\vf^*}, T$. 

\subsubsection{Lipschitness of the expected flow \texorpdfstring{$\mPhi^*$}{phi}}

To apply analysis for the case when noise $\vw_k$ is any sub-Gaussian we also need to show that the dynamics function $\mPhi^*$ is Lipschitz.
We first start with some general results.

\begin{lemma}
\label{lemma: coordinate wise Lipschitzness}
    Let $\vf: \R^n \to \R^m$, $A \subset [n]$ and denote $B = A^C$. If we have:
    \begin{itemize}
        \item $\norm{\vf(\vx_{A}, \vx_{B}) - \vf(\widehat{\vx}_{A}, \vx_{B})}_2 \le L_A \norm{\vx_{A} - \widehat{\vx}_{A}}_2$,
        \item $\norm{\vf(\vx_{A}, \vx_{B}) - \vf(\vx_{A}, \widehat{\vx}_{B})}_2 \le L_{B} \norm{\vx_{B} - \widehat{\vx}_{B}}_2$,
    \end{itemize}
    then $\vf$ is $2(L_A + L_{B})$ Lipschitz.
\end{lemma}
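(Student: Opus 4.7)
The plan is a direct triangle inequality argument with an intermediate point. Write $\vx = (\vx_A, \vx_B)$ and $\widehat{\vx} = (\widehat{\vx}_A, \widehat{\vx}_B)$, and decompose
\begin{equation*}
\vf(\vx_A, \vx_B) - \vf(\widehat{\vx}_A, \widehat{\vx}_B) = \bigl(\vf(\vx_A, \vx_B) - \vf(\widehat{\vx}_A, \vx_B)\bigr) + \bigl(\vf(\widehat{\vx}_A, \vx_B) - \vf(\widehat{\vx}_A, \widehat{\vx}_B)\bigr).
\end{equation*}
Applying the triangle inequality for $\norm{\cdot}_2$ and invoking the two coordinate-wise Lipschitz hypotheses on the respective pieces gives
\begin{equation*}
\norm{\vf(\vx_A, \vx_B) - \vf(\widehat{\vx}_A, \widehat{\vx}_B)}_2 \le L_A \norm{\vx_A - \widehat{\vx}_A}_2 + L_B \norm{\vx_B - \widehat{\vx}_B}_2.
\end{equation*}
Note the second application uses the hypothesis with the first argument fixed to $\widehat{\vx}_A$ rather than $\vx_A$; this is fine since the bound is assumed to hold for every value of the unchanged block.

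The last step is to relate the block norms to $\norm{\vx - \widehat{\vx}}_2$. Since $A$ and $B$ partition the coordinates, $\norm{\vx - \widehat{\vx}}_2^2 = \norm{\vx_A - \widehat{\vx}_A}_2^2 + \norm{\vx_B - \widehat{\vx}_B}_2^2$, so each block norm is bounded by $\norm{\vx - \widehat{\vx}}_2$. Hence
\begin{equation*}
\norm{\vf(\vx) - \vf(\widehat{\vx})}_2 \le (L_A + L_B)\norm{\vx - \widehat{\vx}}_2 \le 2(L_A + L_B)\norm{\vx - \widehat{\vx}}_2,
\end{equation*}
which is the claim (in fact a factor of two tighter than stated).

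There is no real obstacle here; the only thing to be careful about is making sure the intermediate point is chosen so that each telescoping difference changes exactly one block of coordinates, which is precisely the form in which each hypothesis is stated. The factor $2$ in the conclusion is loose and can be dropped, but matching the statement as written requires no more than the final trivial bound.
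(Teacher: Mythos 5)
Your proof is correct and follows essentially the same route as the paper's: the identical telescoping decomposition through the intermediate point $(\widehat{\vx}_A, \vx_B)$ followed by the triangle inequality. The only difference is cosmetic — you bound each block norm directly by $\norm{\vx - \widehat{\vx}}_2$ to get the sharper constant $L_A + L_B$, whereas the paper bounds the sum of block norms by $2\norm{\vx - \widehat{\vx}}_2$, which is why the stated constant carries the (loose) factor of $2$.
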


\begin{proof}
    We have:
    \begin{align*}
        \norm{\vf(\vx) - \vf(\widehat{\vx})}_2 &= \norm{\vf(\vx_{A}, \vx_{B}) - \vf(\widehat{\vx}_{A}, \widehat{\vx}_{B})}_2 \\
        &= \norm{\vf(\vx_{A}, \vx_{B}) - \vf(\widehat{\vx}_{A}, \vx_{B}) + \vf(\widehat{\vx}_{A}, \vx_{B}) - \vf(\widehat{\vx}_{A}, \widehat{\vx}_{B})}_2 \\
        & \le L_A \norm{\vx_{A} - \widehat{\vx}_{A}}_2 + L_B \norm{\vx_{B} - \widehat{\vx}_{B}}_2 \\
        & \le (L_A + L_B)\left( \norm{\vx_{A} - \widehat{\vx}_{A}}_2 + \norm{\vx_{B} - \widehat{\vx}_{B}}_2\right) \\ 
        & \le 2(L_A + L_B)
        \norm{
            \begin{pmatrix}
            \vx_{A} - \widehat{\vx}_{A} \\
            \vx_{B} - \widehat{\vx}_{B} \\
            \end{pmatrix}
        }_2 \\ 
        &= 2(L_A + L_B) \norm{\vx - \widehat{\vx}}_2
    \end{align*}
\end{proof}

\begin{lemma}[Lipschitzness of $\mPhi_{\vf^*}$]
\label{lemma: Lipschitzness of flow}
    There exists a positive constant $L_{\mPhi_{\vf}}$ such that the flow $\mPhi_{\vf^*}$ is $L_{\mPhi_{\vf}}$--Lipschitz. 
\end{lemma}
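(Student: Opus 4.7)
The plan is to apply \Cref{lemma: coordinate wise Lipschitzness} with $A$ being the $\vx$--coordinates and $B$ being the $(\vu, t)$--coordinates (or further decomposed), and separately estimate the Lipschitz constant of $\mPhi_{\vf^*}$ in each argument. Since $\mPhi_{\vf^*}(\vx, \vu, t) = \E[\mXi(\vx, \vu, t)]$ and the stochastic integral in $\mXi$ has zero mean, we have the simpler expression
\begin{equation*}
\mPhi_{\vf^*}(\vx, \vu, t) = \vx + \E\!\left[\int_0^t \vf^*(\vx_s, \vu)\,ds\right],
\end{equation*}
where $(\vx_s)_{s\in[0,t]}$ is the SDE trajectory starting at $\vx$ under constant control $\vu$. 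I will bound each Lipschitz constant using standard It\^o-isometry plus Gr\"onwall arguments, which is feasible because of \Cref{assumption: Lipschitz continuity} ($\vf^*$ is $L_{\vf^*}$-Lipschitz, $\vg^*$ is $L_{\vg^*}$-Lipschitz, and $\|\vg^*\|_F \le A$).

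For Lipschitzness in $\vx$, couple two solutions $\vx_s, \vx'_s$ driven by the same Brownian motion from initial conditions $\vx, \vx'$ under the same control. Using $(a+b+c)^2 \le 3a^2+3b^2+3c^2$, It\^o isometry on the diffusion term, and the Lipschitz assumption, I get
\begin{equation*}
\E\|\vx_s-\vx'_s\|^2 \le 3\|\vx-\vx'\|^2 + 3(sL_{\vf^*}^2 + L_{\vg^*}^2)\int_0^s \E\|\vx_r-\vx'_r\|^2\,dr,
\end{equation*}
and Gr\"onwall gives $\E\|\vx_t-\vx'_t\|^2 \le 3\|\vx-\vx'\|^2 e^{c_1 T}$ for $c_1 = 3(T L_{\vf^*}^2 + L_{\vg^*}^2)$. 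Then by Jensen,
$\|\mPhi_{\vf^*}(\vx,\vu,t)-\mPhi_{\vf^*}(\vx',\vu,t)\| \le \E\|\mXi(\vx,\vu,t)-\mXi(\vx',\vu,t)\| \le \sqrt{3}\,e^{c_1 T/2}\|\vx-\vx'\|$. An analogous argument yields Lipschitzness in $\vu$, where the initial condition difference is zero but the drift/diffusion are evaluated at different $\vu, \vu'$, producing an extra $T L_{\vf^*}^2 + L_{\vg^*}^2$ factor before Gr\"onwall.

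For Lipschitzness in $t$, for $t' \le t$ write
\begin{equation*}
\mPhi_{\vf^*}(\vx, \vu, t) - \mPhi_{\vf^*}(\vx, \vu, t') = \E\!\left[\int_{t'}^{t} \vf^*(\vx_s, \vu)\,ds\right],
\end{equation*}
so the norm is at most $|t-t'|\sup_{s\le T}\E\|\vf^*(\vx_s,\vu)\|$. Using the Lipschitz bound $\|\vf^*(\vx_s,\vu)\| \le \|\vf^*(0,0)\| + L_{\vf^*}(\|\vx_s\|+\|\vu\|)$, it suffices to bound $\E\|\vx_s\|$. A second Gr\"onwall argument applied to $\E\|\vx_s\|^2$, exploiting $\|\vg^*\|_F \le A$ in the It\^o isometry, gives a bound uniform in $s\in[0,T]$ in terms of $\|\vx\|$, $\|\vu\|$, $A$, $L_{\vf^*}$, $T$, and $\|\vf^*(0,0)\|$. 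This produces a finite Lipschitz constant in $t$ on any bounded subset of $\setX\times\setU$.

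Finally, combining the three one-sided Lipschitz estimates via \Cref{lemma: coordinate wise Lipschitzness} (applied twice, splitting off $\vx$ then $\vu$ from $t$) yields a single constant $L_{\mPhi_{\vf}}$. The main obstacle is the $t$-Lipschitz part, since the argument requires a uniform moment bound on the SDE trajectory, forcing the domain $\setX\times\setU$ to be bounded (which is implicit in the paper's compact state-action setting); the resulting constant is explicit but grows exponentially in $T(L_{\vf^*}+L_{\vg^*}^2)$, matching the exponential dependence appearing in the regret bound of \Cref{theorem: sublinear regret for wtcorl}.
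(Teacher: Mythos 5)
Your proof is correct and follows essentially the same route as the paper's: coordinate-wise Lipschitz estimates in $\vx$, $\vu$, and $t$, each obtained by a Gr\"onwall bound on the second moment of the coupled difference process, assembled via \Cref{lemma: coordinate wise Lipschitzness}. The only differences are cosmetic: you set up the Gr\"onwall inequality via Cauchy--Schwarz and It\^o isometry on the integral representation rather than It\^o's lemma on $\norm{\delta\vx_t}^2$ (which costs you a slightly worse $e^{O(T^2 L_{\vf^*}^2)}$ constant versus the paper's $e^{O(T L_{\vf^*})}$, immaterial since the lemma only asserts existence of some $L_{\mPhi_{\vf}}$), and you explicitly justify the uniform bound $F$ on $\E\left[\norm{\vf^*(\vx_s,\vu)}\right]$ via a moment estimate, a point the paper's time-Lipschitz step leaves implicit.
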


\begin{proof}

    We will first prove coordinate-wise Lipschitzness. We observe:
    \begin{enumerate}
        \item Lipschitness in time:
        \begin{align*}
        \norm{\mPhi_{\vf^*}(\vx, \vu, t) - \mPhi_{\vf^*}(\vx, \vu, \widehat{t} )} &= \norm{\int_{0}^t\E[\vf^*(\vx_s, \vu)]ds - \int_{0}^{\widehat{t}}\E[\vf^*(\vx_s, \vu)]
        ds} \\ 
        & \le \int_{\widehat{t}}^t \E\left[\norm{\vf^*(\vx_s, \vu)}\right]ds \le F \abs{t - \widehat{t}}
        \end{align*}
        \item Lipschitness in state $\vx$:
        To prove this, consider the $\delta \vx_t = \mXi(\vx, \vu, t) - \mXi(\widehat\vx, \vu, t)$, then we have
        \begin{align*}
            d\delta \vx_t &= (\vf^*(\vx_t, \vu) - \vf^*(\widehat\vx_t, \vu)) dt + (\vg^*(\vx_t, \vu) - \vf^*(\widehat\vx_t, \vu))d\mB_t \\
            &= \delta \vf^*_t dt + \delta \vg^*_t d\mB_t.
        \end{align*}
        Note that $\norm{\delta \vf^*_t} \leq L_{\vf^*} \norm{\delta \vx_t}$ and $\norm{\delta \vg^*_t} \leq L_{\vg^*} \norm{\delta \vx_t}$ since both functions are Lipschitz.
        Define $\vy_t = \delta\vx_t^\top \delta\vx_t$ and use Ito's Lemma to get
        \begin{equation*}
            d\vy_t = 2\delta \vx_t^\top (\delta \vf^*_t dt + \delta \vg^*_t d\mB_t) + \text{tr}(\delta \vg^*_t (\delta \vg^*_t)^{\top}) dt
        \end{equation*}
        Moreover, 
        \begin{align*}
            \E[\vy_t] &= \int^t_{0}2 \E[\delta \vx_s^\top \delta \vf^*_s] +  \E[\text{tr}(\delta \vg^*_s (\delta \vg^*_s)^{\top})] ds \\
            &\leq \int^t_{0}2 \E\left[ \norm{\delta\vx_s} \norm{\delta \vf^*_s} \right] + \E[\norm{\delta \vg^*_s}^2] ds \\
            &\leq \int^t_{0}(2 L_{\vf^*} + L^2_{\vg^*})\E[\norm{\delta \vx_s}^2] ds
        \end{align*}
        Note that $\vy_t = \norm{\delta \vx_t}^2$, so we can apply Grönwall's inequality to get
        \begin{equation*}
            \E\left[\norm{\delta \vx_t}^2\right] \leq \norm{\delta \vx_0}^2 e^{(2 L_{\vf^*} + L^2_{\vg^*})t}.
        \end{equation*}
        Moreover,
        \begin{equation*}
        \norm{\E[\delta\vx_t]} \le \sqrt{\E\left[\norm{\delta \vx_t}^2\right]}
             \leq \norm{\delta \vx_0} e^{\frac{2 L_{\vf^*} + L^2_{\vg^*}}{2}t} \leq \norm{\delta \vx_0} e^{\frac{2 L_{\vf^*} + L^2_{\vg^*}}{2}T}.
        \end{equation*}
        Hence we have: 
        \begin{align*}
            \norm{\mPhi_{\vf^*}(\vx, \vu, t) - \mPhi_{\vf^*}(\widehat{\vx}, \vu, t )} \le \norm{\vx - \widehat\vx} e^{\frac{2 L_{\vf^*} + L^2_{\vg^*}}{2}T}.
        \end{align*}
        \item Lipschitness in action $\vu$:
        We denote $\delta \vx_t = \mXi(\vx, \vu, t) - \mXi(\vx, \widehat\vu, t)$ and $\delta\vu = \vu - \widehat\vu$
        Following the same steps as in the proof of Lipschitzness in state we arrive at:
        \begin{align*}
            d\vy_t = 2\delta \vx_t^\top (\delta \vf^*_t dt + \delta \vg^*_t d\mB_t) + \text{tr}(\delta \vg^*_t(\delta \vg^*_t)^{\top}) dt
        \end{align*}
        Integration yields:
        \begin{align*}
            \E[\vy_t] &= \int^t_{0}2 \E[\delta \vx_s^\top \delta \vf^*_s] +  \E[\text{tr}(\delta \vg^*_s (\delta \vg^*_s)^{\top})] ds \\
            &\leq \int^t_{0}2 \E\left[\norm{\delta \vx_s} \norm{\delta \vf^*_s} \right] + \E[\norm{\delta \vg^*_s}^2] ds \\
            &\leq \int^t_{0}2\E\left[L_{\vf^*}\norm{\delta \vx_s}(\norm{\delta \vx_s} + \norm{\delta \vu})\right] + \E\left[2L_{\vg^*}^2\left(\norm{\delta \vx_s}^2 +\norm{\delta \vu}^2\right)\right]ds \\
            &\le \int^t_{0} (3L_{\vf^*} + 2L_{\vg^*}^2)\E\left[\vy_s\right] + (L_{\vf^*}
 + 2L_{\vg^*}^2)\norm{\delta \vu}ds,
        \end{align*}
        where we used $(a + b)^2 \le 2a^2 + 2b^2$ and $ab \le \frac{1}{2}(a^2+b^2)$. Applying Grönwall's inequality results in:
        \begin{align*}
            \E\left[\norm{\delta \vx_t}^2\right]  &\le \norm{\delta \vu}^2 (L_{\vf^*} + 2L_{\vg^*}^2)e^{(3L_{\vf^*} + 2L_{\vg^*}^2)t} \\
 &\le \norm{\delta \vu}^2(L_{\vf^*} + 2L_{\vg^*}^2) e^{(3L_{\vf^*} + 2L_{\vg^*}^2)T}
        \end{align*}
        
        Applying \cref{lemma: coordinate wise Lipschitzness} on 2. and 3. we have that $\mPhi_{\vf^*}(\cdot, \cdot, t)$ is $2\left(e^{\frac{2 L_{\vf^*} + L^2_{\vg^*}}{2}T} +\sqrt{L_{\vf^*} + 2L_{\vg^*}^2} e^{\frac{3L_{\vf^*} + 2L_{\vg^*}^2}{2}T} \right)$--Lipschitz.
        Applying \cref{lemma: coordinate wise Lipschitzness} on 1. and $\mPhi_{\vf^*}(\cdot, \cdot, t)$ and bounding $2 \le 4$ we finally obtain that $\mPhi_{\vf^*}$ is $4\left(e^{\frac{2 L_{\vf^*} + L^2_{\vg^*}}{2}T} +\sqrt{L_{\vf^*} + 2L_{\vg^*}^2} e^{\frac{3L_{\vf^*} + 2L_{\vg^*}^2}{2}T} + F\right)$--Lipschitz.
    \end{enumerate}
    
\end{proof}

\begin{corollary}[Lipschitzness of the $\Phi_{b^*}$]
    The cost flow $\Phi_{b^*}$ is $\mathcal{O}\left(e^{C_1(L_{\vf^*} + L_{\vg^*}^2)T}\right)$--Lipschitz, where $C_1$ is a constant.
\end{corollary}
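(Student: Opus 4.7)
The plan is to mirror the coordinate-wise strategy used in the proof of \Cref{lemma: Lipschitzness of flow}: bound the variation of $\Phi_{b^*}(\vx,\vu,t)$ separately in $\vx$, in $\vu$, and in $t$, and then combine the three estimates via \Cref{lemma: coordinate wise Lipschitzness}. The $L_b$-Lipschitzness of the running reward $b^*$ (\Cref{assumption: Reward and noise model for Transition Cost Setting}/\Cref{assumption: Reward and noise model for Bounded Number of Transitions Setting}) is the key ingredient that lets us pass from a Lipschitz estimate on the state flow $\mXi$ to a Lipschitz estimate on the integrated reward.

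For Lipschitzness in $\vx$, I would write
\begin{align*}
\abs{\Phi_{b^*}(\vx,\vu,t)-\Phi_{b^*}(\widehat{\vx},\vu,t)}
&\le \int_0^t L_b\,\E\!\left[\norm{\mXi(\vx,\vu,s)-\mXi(\widehat{\vx},\vu,s)}\right]ds,
\end{align*}
apply Jensen to pull the expectation inside the square root, and plug in the bound $\E[\norm{\delta \vx_s}^2]\le \norm{\delta\vx_0}^2 e^{(2L_{\vf^*}+L_{\vg^*}^2)s}$ derived inside the proof of \Cref{lemma: Lipschitzness of flow}. Integrating the resulting exponential in $s$ over $[0,T]$ produces a constant of order $\mathcal{O}(e^{(L_{\vf^*}+L_{\vg^*}^2/2)T})$ multiplying $\norm{\delta\vx_0}$. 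Lipschitzness in $\vu$ is handled identically, but now using the other estimate obtained in the proof of \Cref{lemma: Lipschitzness of flow}, namely $\E[\norm{\delta\vx_s}^2]\le \norm{\delta\vu}^2 (L_{\vf^*}+2L_{\vg^*}^2)e^{(3L_{\vf^*}+2L_{\vg^*}^2)s}$; after integration this contributes a constant of the same exponential form in $L_{\vf^*}$ and $L_{\vg^*}^2$, scaled by $L_b\sqrt{L_{\vf^*}+2L_{\vg^*}^2}$ and by $T$.

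For Lipschitzness in $t$, the integrand $\E[b^*(\vx_s,\vu)]$ is uniformly bounded: under \Cref{assumption: Reward and noise model for Transition Cost Setting} we directly have $b^*\le B$, and under \Cref{assumption: Reward and noise model for Bounded Number of Transitions Setting} the same conclusion follows from the $L_b$-Lipschitzness of $b^*$ together with the sub-Gaussian state-flow bound established earlier in \Cref{subsection: Proof for Bounded number of switches}. Hence
\begin{equation*}
\abs{\Phi_{b^*}(\vx,\vu,t)-\Phi_{b^*}(\vx,\vu,\widehat{t})}
= \left|\int_{\widehat{t}}^{t}\E[b^*(\vx_s,\vu)]\,ds\right|
\le B\,\abs{t-\widehat{t}},
\end{equation*}
which contributes an $\mathcal{O}(1)$ term dominated by the exponential terms above.

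The final step is to invoke \Cref{lemma: coordinate wise Lipschitzness} twice (once to merge $(\vx,\vu)$, once to merge $(\vx,\vu)$ with $t$), which sums the three coordinate-wise Lipschitz constants up to a factor of $4$. All three constants are bounded by $\mathcal{O}(e^{C_1(L_{\vf^*}+L_{\vg^*}^2)T})$ for an absolute constant $C_1$ absorbing factors like $3/2$, $L_b$, $T$, and $B$. The main obstacle I anticipate is only cosmetic: carefully absorbing the polynomial prefactors (powers of $T$, $L_b$, and the $\sqrt{L_{\vf^*}+2L_{\vg^*}^2}$ from the $\vu$-term) into the $\mathcal{O}(\cdot)$ so that the dominant exponential rate $C_1(L_{\vf^*}+L_{\vg^*}^2)$ is visible with a single constant $C_1$; there is no genuine analytical difficulty beyond the bounds already proven in \Cref{lemma: Lipschitzness of flow}.
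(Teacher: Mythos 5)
Your proposal matches the paper's proof essentially step for step: the same coordinate-wise decomposition in $t$, $\vx$, and $\vu$, the same use of the $L_b$-Lipschitzness of $b^*$ followed by Jensen and the second-moment flow bounds already established in \Cref{lemma: Lipschitzness of flow}, and the same final combination via \Cref{lemma: coordinate wise Lipschitzness}. This is correct and takes the same route as the paper (your extra remark justifying the bound $B$ on the reward under \Cref{assumption: Reward and noise model for Bounded Number of Transitions Setting} is a small point the paper leaves implicit).
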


\begin{proof}
    Same as in the proof of \Cref{lemma: Lipschitzness of flow} we first show coordinate-wise Lipschitzness.
    \begin{enumerate}
        \item We first show Lipschitness in time:
        \begin{align*}
            \abs{\Phi_{b^*}(\vx, \vu, t) - \Phi_{b^*}(\vx, \vu, \widehat{t})}  &= \abs{\E\left[\int_{\widehat{t}}^t b^*(\vx_s, \vu)ds\right]} \\
            &\le \E\left[\int_{\widehat{t}}^t\abs{b^*(\vx_s, \vu)}ds\right] \\
            &\le \E\left[B(t - \widehat{t})\right] = B(t - \widehat{t}).
        \end{align*}
        \item To obtain Lipschitzness in state observe:
        \begin{align*}
            \abs{\Phi_{b^*}(\vx, \vu, t) - \Phi_{b^*}(\widehat\vx, \vu, t)} &= \abs{\E\left[\int_{0}^tb^*(\mXi(\vx, \vu, s), \vu) - b^*(\mXi(\widehat\vx, \vu, s), \vu)ds\right]} \\
            &\le \E\left[\int_{0}^t\abs{b^*(\mXi(\vx, \vu, s), \vu) - b^*(\mXi(\widehat\vx, \vu, s), \vu)}ds\right] \\
            &\le L_{b^*}\E\left[\int_{0}^t\norm{\mXi(\vx, \vu, s) - \mXi(\widehat\vx, \vu, s)}ds\right] \\
            &\le L_{b^*} \int_{0}^t\sqrt{\E\left[\norm{\mXi(\vx, \vu, s) - \mXi(\widehat\vx, \vu, s)}^2\right]}ds \\
            &\le L_{b^*} \norm{\vx - \widehat\vx}\int_{0}^te^{\frac{2 L_{\vf^*} + L^2_{\vg^*}}{2}s}ds \\
            &= \frac{2L_{b^*}}{2 L_{\vf^*} + L^2_{\vg^*}}\left(e^{\frac{2 L_{\vf^*} + L^2_{\vg^*}}{2}t} - 1\right)\norm{\vx - \widehat\vx}
        \end{align*}
        \item Finally, for Lipschitzness in action observe:
        \begin{align*}
            &\abs{\Phi_{b^*}(\vx, \vu, t) - \Phi_{b^*}(\vx, \widehat\vu, t)} = \abs{\E\left[\int_{0}^tb^*(\mXi(\vx, \vu, s), \vu) - b^*(\mXi(\vx, \widehat\vu, s), \vu)ds\right]} \\
            &\le \E\left[\int_{0}^t\abs{b^*(\mXi(\vx, \vu, s), \vu) - b^*(\mXi(\vx, \widehat\vu, s), \vu)}ds\right] \\
            &\le L_{b^*}\E\left[\int_{0}^t\norm{\mXi(\vx, \vu, s) - \mXi(\vx, \widehat\vu, s)} + \norm{\vu - \widehat\vu}ds\right] \\
            &\le L_{b^*}t\norm{\vu - \widehat\vu} + L_{b^*} \int_{0}^t\sqrt{\E\left[\norm{\mXi(\vx, \vu, s) - \mXi(\vx, \widehat\vu, s)}^2\right]}ds \\
            &\le L_{b^*}t\norm{\vu - \widehat\vu} + L_{b^*}\norm{\vu - \widehat\vu}\sqrt{L_{\vf^*} + 2L_{\vg^*}^2}\int_{0}^te^{\frac{3L_{\vf^*} + 2L_{\vg^*}^2}{2}s}ds \\
            &= L_{b^*}\left(t + \frac{2\sqrt{L_{\vf^*} + 2L_{\vg^*}^2}}{3L_{\vf^*} + 2L_{\vg^*}^2}\left(e^{\frac{3 L_{\vf^*} + 2L^2_{\vg^*}}{2}t} - 1\right)\right)\norm{\vu - \widehat\vu} \\
            &\le L_{b^*}\left(T + \frac{2\sqrt{L_{\vf^*} + 2L_{\vg^*}^2}}{3L_{\vf^*} + 2L_{\vg^*}^2}\left(e^{\frac{3 L_{\vf^*} + 2L^2_{\vg^*}}{2}T} - 1\right)\right)\norm{\vu - \widehat\vu}
        \end{align*}
    \end{enumerate}
Applying \cref{lemma: coordinate wise Lipschitzness} result follows. 
\end{proof}

\begin{corollary}[Lipschitzness of $\mPhi^*$]
    The unknown function $\mPhi^*$ is $L_{\mPhi} = L_{\mPhi_{\vf}} + L_{\Phi_{b}} = \mathcal{O}\left(e^{D(L_{\vf^*} + L_{\vg^*}^2)T}\right)$--Lipschitz, where $D$ is constant.
\end{corollary}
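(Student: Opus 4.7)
The target statement says $\mPhi^*$ is Lipschitz with constant $L_{\mPhi_{\vf}} + L_{\Phi_b}$ and that this sum is $\mathcal{O}(e^{D(L_{\vf^*}+L_{\vg^*}^2)T})$. Since $\mPhi^*$, as defined in \eqref{eq: definition of learning function}, is simply the vertical stacking of $\mPhi_{\vf^*}$ and $\Phi_{b^*}$, the strategy is to split the norm on the stacked output into its two blocks and invoke the Lipschitz estimates for each block established by the preceding lemma and corollary.

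\textbf{Step 1: reduce to the block-wise bounds.} For arbitrary $\vz=(\vx,\vu,t)$ and $\widehat\vz=(\widehat\vx,\widehat\vu,\widehat t)$ in $\setZ$, I would write
\begin{align*}
\norm{\mPhi^*(\vz)-\mPhi^*(\widehat\vz)}
&\le \norm{\mPhi_{\vf^*}(\vz)-\mPhi_{\vf^*}(\widehat\vz)} + \abs{\Phi_{b^*}(\vz)-\Phi_{b^*}(\widehat\vz)}\\
&\le \bigl(L_{\mPhi_{\vf}} + L_{\Phi_b}\bigr)\,\norm{\vz-\widehat\vz},
\end{align*}
where the first line uses $\norm{(\va,b)}\le \norm{\va}+\abs{b}$ for the stacked vector (which, up to a universal constant absorbed into the $\mathcal{O}$, also holds for the Euclidean norm via $\sqrt{x^2+y^2}\le x+y$), and the second line applies Lemma \ref{lemma: Lipschitzness of flow} and the immediately preceding corollary about $\Phi_{b^*}$. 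This already yields the first equality $L_{\mPhi}=L_{\mPhi_{\vf}}+L_{\Phi_b}$ in the statement.

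\textbf{Step 2: absorb both constants into the claimed exponential.} Each block Lipschitz constant has the explicit form
\[
L_{\mPhi_{\vf}} = 4\Bigl(e^{\tfrac{2L_{\vf^*}+L_{\vg^*}^2}{2}T}+\sqrt{L_{\vf^*}+2L_{\vg^*}^2}\,e^{\tfrac{3L_{\vf^*}+2L_{\vg^*}^2}{2}T}+F\Bigr),
\]
\[
L_{\Phi_b} = \mathcal{O}\!\left(T + e^{\tfrac{2L_{\vf^*}+L_{\vg^*}^2}{2}T} + \sqrt{L_{\vf^*}+2L_{\vg^*}^2}\,e^{\tfrac{3L_{\vf^*}+2L_{\vg^*}^2}{2}T}\right).
\]
All the exponentials that appear can be upper bounded by $e^{D(L_{\vf^*}+L_{\vg^*}^2)T}$ for any constant $D\ge 3/2$, and the polynomial prefactors $T$, $\sqrt{L_{\vf^*}+2L_{\vg^*}^2}$, and the Lipschitz bound $F$ on $\vf^*$ are each dominated by that same exponential for $T\ge 0$ (possibly after enlarging $D$ by a universal constant). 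Summing the two constants and hiding the multiplicative factors in the $\mathcal{O}$ gives exactly the claimed $\mathcal{O}(e^{D(L_{\vf^*}+L_{\vg^*}^2)T})$ bound.

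\textbf{Main obstacle.} There is no deep obstacle; the result is essentially bookkeeping. The only subtlety is deciding which norm is used on the stacked output $\mPhi^*\in\R^{d_x+1}$ so that the bound $\norm{(\va,b)}\le \norm{\va}+\abs{b}$ is tight; since the statement is given only up to $\mathcal{O}(\cdot)$, any equivalent norm works and a universal constant can be absorbed. The rest is dominating polynomial and square-root factors in $L_{\vf^*},L_{\vg^*},T$ by a single exponential $e^{D(L_{\vf^*}+L_{\vg^*}^2)T}$, which is routine.
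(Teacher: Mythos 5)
Your proposal is correct and matches the paper's (entirely implicit) argument: the paper states this corollary without proof, treating it as an immediate consequence of stacking the Lipschitz bounds for $\mPhi_{\vf^*}$ and $\Phi_{b^*}$ from the preceding lemma and corollary, which is exactly what you do. Your bookkeeping in Step 2 (absorbing the polynomial prefactors and the various exponents into a single $e^{D(L_{\vf^*}+L_{\vg^*}^2)T}$) is the routine detail the paper leaves to the reader, and it is handled correctly at the same level of rigor the paper itself uses for its $\mathcal{O}(\cdot)$ statements.
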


\subsubsection{Regret bound}

\begin{lemma}[Per episode regret bound (general sub-Gaussian noise)]
\label{lemma: Per episode regret bound (general sub-Gaussian noise)}
        Consider the setting with a bounded number of switches $K$, and let \cref{assumption: Lipschitz continuity}, \ref{assumption: Reward and noise model for Bounded Number of Transitions Setting}, and \cref{ass: well-calibration} hold. Then, we get with probability at least $1-\delta$:
    \begin{align*}
        &V_{\vpi_n, \mPhi^*}(\vx_0, T, K) - V_{\vpi^*, \mPhi^*}(\vx_0, T, K) \le \\
        &\le \mathcal{O}\left( L_{\vsigma}^{K-1}\beta_{n-1}^{K}e^{C(L_{\vf^*} + L_{\vg^*}^2)(1+L_{\vpi})TK}\E\left[\sum_{k=0}^K\norm{\vsigma_{n-1}(\vx_{n, k}, \vpi_n(\vx_{n, k}, t_{n, k}, k))}_2\right]\right)
    \end{align*}    
\end{lemma}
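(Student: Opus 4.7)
The plan is to follow the structure of the transition-cost proof in \Cref{subsection: Proof for Transition Cost setting}, but to replace the use of \Cref{lem:absolute_diff_gaussians} (which required Gaussian noise) by a Lipschitz-continuity argument for the value function $V_{\vpi_n,\mPhi_n}$. This substitution is what produces the extra exponential-in-$K$ factor in the bound.

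First, I invoke the per-episode regret decomposition just proved, which bounds the single-episode regret by a sum whose typical term has the form $V_{\vpi_n,\mPhi_n}(\vx_{n,k+1},t_{n,k+1},k+1) - V_{\vpi_n,\mPhi_n}(\widehat{\vx}_{n,k+1},t_{n,k+1},k+1)$. Because the hallucinated and true next states share the same noise $\vw_{n,k}$ and differ only in the dynamics used, we have $\widehat{\vx}_{n,k+1} - \vx_{n,k+1} = \mPhi_n(\vz_{n,k}) - \mPhi^*(\vz_{n,k})$ at $\vz_{n,k}=(\vx_{n,k},\vpi_n(\vx_{n,k},t_{n,k},k))$, and by \Cref{ass: well-calibration} together with $\mPhi_n,\mPhi^*\in\setM_{n-1}$,
\begin{align*}
\norm{\widehat{\vx}_{n,k+1}-\vx_{n,k+1}}\le 2\sqrt{d_{\vx}+1}\,\beta_{n-1}\,\norm{\vsigma_{n-1}(\vz_{n,k})}.
\end{align*}

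Second, I bound the Lipschitz constant $L_V$ of $V_{\vpi_n,\mPhi_n}(\cdot,t,k)$ in its state argument by backward induction on the number of remaining interactions. Coupling two trajectories started at $\vx,\vx'$ under the same noise and applying the one-step Bellman identity yields the recursion $L_{V,k}\le L_{\Phi_b}(1+L_{\vpi}) + L_{\mPhi_n}(1+L_{\vpi})\,L_{V,k+1}$, where $L_{\Phi_b}$ is the Lipschitz constant of the reward flow $\Phi_{b^*}$ from the preceding corollary. To control $L_{\mPhi_n}$ I write $\mPhi_n$ as a perturbation of the posterior mean $\vmu_{n-1}$ of pointwise magnitude at most $\beta_{n-1}\vsigma_{n-1}$ and use $L_{\vsigma}$-Lipschitzness of $\vsigma_{n-1}$ (standard for GP-based calibrated models, and inherited from the Lipschitzness of $\mPhi^*$), which gives $L_{\mPhi_n}\le L_{\vmu_{n-1}} + \beta_{n-1}L_{\vsigma}$. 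Unrolling the recursion $K$ times, substituting $L_{\vmu_{n-1}} = \mathcal{O}\bigl(e^{D(L_{\vf^*}+L_{\vg^*}^2)T}\bigr)$ from the preceding Lipschitzness corollary, expanding $\bigl(L_{\vmu_{n-1}}+\beta_{n-1}L_{\vsigma}\bigr)^{K-1}$ binomially, and absorbing the $(1+L_{\vpi})^K$ factor into the exponential, I arrive at
\begin{align*}
L_V = \mathcal{O}\!\left(L_{\vsigma}^{K-1}\,\beta_{n-1}^{K-1}\,e^{C(L_{\vf^*}+L_{\vg^*}^2)(1+L_{\vpi})TK}\right).
\end{align*}

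Combining the two ingredients, $\lvert V_{\vpi_n,\mPhi_n}(\vx_{n,k+1},\cdot) - V_{\vpi_n,\mPhi_n}(\widehat{\vx}_{n,k+1},\cdot)\rvert\le L_V\,\norm{\vx_{n,k+1}-\widehat{\vx}_{n,k+1}}$, and then summing over $k$ produces the claimed bound: the extra $\beta_{n-1}$ coming from the calibrated state-gap estimate upgrades $\beta_{n-1}^{K-1}$ to $\beta_{n-1}^K$, and the sum of standard deviations is what remains inside the expectation. The main technical obstacle is step two: since $\mPhi_n$ is an arbitrary minimizer within the confidence set $\setM_{n-1}$ it is not a priori Lipschitz, so the bound $L_{\mPhi_n}\le L_{\vmu_{n-1}}+\beta_{n-1}L_{\vsigma}$ implicitly requires smoothness of the posterior mean and variance, and tracking the resulting $\beta_{n-1}L_{\vsigma}$ perturbation through the $K$-fold Bellman unrolling while retaining exactly the stated combination of exponential and polynomial factors is the delicate bookkeeping piece; the rest is routine.
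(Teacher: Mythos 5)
Your proposal is correct and follows essentially the same route as the paper: the paper's entire proof of this lemma is a one-line appeal to Lemma~5 of \citet{curi2020efficient}, and your argument is a faithful reconstruction of that lemma's error-propagation machinery --- the calibrated one-step state gap $2\sqrt{d_{\vx}+1}\,\beta_{n-1}\norm{\vsigma_{n-1}(\vz_{n,k})}$ combined with a backward-induction Lipschitz bound on $V_{\vpi_n,\mPhi_n}$ whose per-step constant $L_{\vmu_{n-1}}+\beta_{n-1}L_{\vsigma}$ is exactly what produces the $L_{\vsigma}^{K-1}\beta_{n-1}^{K}$ and exponential-in-$K$ factors. The caveat you flag about the Lipschitzness of the optimistic $\mPhi_n$ is real but is likewise implicit in the cited lemma (resolved there by reparametrizing the optimistic dynamics as $\vmu_{n-1}+\beta_{n-1}\mSigma_{n-1}\veta$ with Lipschitz $\veta$), so it does not constitute a gap relative to the paper.
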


\begin{proof}
    Applying Lemma 5 of \citet{curi2020efficient} the result follows.
\end{proof}

\begin{theorem}
    Consider the setting with a bounded number of switches $K$, and let \cref{assumption: Lipschitz continuity}, \ref{assumption: Reward and noise model for Bounded Number of Transitions Setting}, and \cref{ass: well-calibration} hold. Then, we get with probability at least $1-\delta$:
    \begin{align*}
         R_N &= \sum_{n=1}^NV_{\vpi_n, \mPhi^*}(\vx_0, T, K)- V_{\vpi^*, \mPhi^*}(\vx_0, T, K) \\
         &\le \mathcal{O}\left( L_{\vsigma}^{K-1}\beta_{N-1}^{K}\sqrt{K}e^{C(L_{\vf^*} + L_{\vg^*}^2)(1+L_{\vpi})TK}\sqrt{N\setI_N}\right) \\
    \end{align*}
\end{theorem}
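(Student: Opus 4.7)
The plan is to start from \Cref{lemma: Per episode regret bound (general sub-Gaussian noise)}, which already packages the hard work: it upper bounds each per-episode suboptimality $V_{\vpi_n, \mPhi^*}(\vx_0, T, K) - V_{\vpi^*, \mPhi^*}(\vx_0, T, K)$ by a constant times $\E[\sum_{k=0}^K \norm{\vsigma_{n-1}(\vx_{n,k}, \vpi_n(\vx_{n,k}, t_{n,k}, k))}_2]$, where the constant is $\mathcal{O}(L_{\vsigma}^{K-1}\beta_{n-1}^K e^{C(L_{\vf^*}+L_{\vg^*}^2)(1+L_{\vpi})TK})$. So the remaining task is essentially the standard ``sum of posterior standard deviations'' argument used in GP-UCB/H-UCRL style analyses.

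First, I would sum the per-episode bound over $n = 1, \ldots, N$. Since $(\beta_n)_{n\geq 0}$ is monotonically increasing in $n$ by \Cref{ass: well-calibration}, I can pull $\beta_{N-1}^K$ out of the sum, and the exponential factor $e^{C(L_{\vf^*}+L_{\vg^*}^2)(1+L_{\vpi})TK}$ together with $L_{\vsigma}^{K-1}$ are $n$-independent so they also come out. This leaves
\begin{equation*}
R_N \le \mathcal{O}\!\left(L_{\vsigma}^{K-1}\beta_{N-1}^{K} e^{C(L_{\vf^*}+L_{\vg^*}^2)(1+L_{\vpi})TK}\right)\cdot \E\!\left[\sum_{n=1}^N\sum_{k=0}^{K-1}\norm{\vsigma_{n-1}(\vx_{n,k}, \vpi_n(\vx_{n,k}, t_{n,k}, k))}_2\right].
\end{equation*}

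Next, I would apply Cauchy--Schwarz to the double sum: there are at most $NK$ terms (because the bounded-interactions setting caps the number of steps per episode by $K$), so
\begin{equation*}
\sum_{n=1}^N\sum_{k=0}^{K-1}\norm{\vsigma_{n-1}(\vz_{n,k})}_2 \le \sqrt{NK}\,\sqrt{\sum_{n=1}^N\sum_{k=0}^{K-1}\norm{\vsigma_{n-1}(\vz_{n,k})}_2^2}.
\end{equation*}
By the definition of the model complexity $\setI_N$, the inner sum is bounded by $\setI_N$ (taking the worst-case data sequence), giving $\sqrt{NK\setI_N}$. Passing the expectation inside using Jensen's inequality on $\sqrt{\cdot}$ (or upper-bounding by the worst-case realization) yields the same bound without the expectation. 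Substituting back produces exactly the stated rate $\mathcal{O}(L_{\vsigma}^{K-1}\beta_{N-1}^K\sqrt{K}\,e^{C(L_{\vf^*}+L_{\vg^*}^2)(1+L_{\vpi})TK}\sqrt{N\setI_N})$.

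The only mildly delicate step is the bookkeeping of the per-episode horizon $K$: one has to be careful that the Cauchy--Schwarz factor is $\sqrt{NK}$ rather than $\sqrt{N\cdot T/t_{\min}}$ as in the interaction-cost case, since here the bound on the number of steps per episode is a hard constant $K$ rather than $T/t_{\min}$. Once this is handled, the only new ingredient compared to standard OFU-style analyses is the amplification factor $L_{\vsigma}^{K-1}\beta_{n-1}^{K-1}$ inside $\beta_{n-1}^K$, which already comes for free from \Cref{lemma: Per episode regret bound (general sub-Gaussian noise)} via Lemma 5 of~\citet{curi2020efficient} and does not interact with the summation. Thus I do not expect any genuine obstacle — the proof is a two-line reduction from the per-episode lemma plus Cauchy--Schwarz and monotonicity of $\beta_n$.
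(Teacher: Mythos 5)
Your proposal is correct and follows essentially the same route as the paper's proof: invoke \Cref{lemma: Per episode regret bound (general sub-Gaussian noise)}, pull out the episode-independent factors using monotonicity of $(\beta_n)_{n\ge 0}$, then apply Cauchy--Schwarz over the at most $NK$ terms and bound the resulting sum of squared standard deviations by the model complexity $\setI_N$. Your remark on the $\sqrt{NK}$ (rather than $\sqrt{N\cdot T/t_{\min}}$) bookkeeping matches the paper's final step exactly.
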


\begin{proof}
    We apply \Cref{lemma: Per episode regret bound (general sub-Gaussian noise)} and Cauchy-Schwarz:
    \begin{align*}
        R_N &= \sum_{n=1}^NV_{\vpi_n, \mPhi^*}(\vx_0, T, K)- V_{\vpi^*, \mPhi^*}(\vx_0, T, K) \\
        &\le \sum_{n=1}^N \mathcal{O}\left( L_{\vsigma}^{K-1}\beta_{n-1}^{K}e^{C(L_{\vf^*} + L_{\vg^*}^2)(1+L_{\vpi})TK}\E\left[\sum_{k=0}^K\norm{\vsigma_{n-1}(\vx_{n, k}, \vpi_n(\vx_{n, k}, t_{n, k}, k))}_2\right]\right) \\
        &\le \mathcal{O}\left( L_{\vsigma}^{K-1}\beta_{N-1}^{K}e^{C(L_{\vf^*} + L_{\vg^*}^2)(1+L_{\vpi})TK}\right)\E\left[\sum_{n=1}^N\sum_{k=0}^K\norm{\vsigma_{n-1}(\vx_{n, k}, \vpi_n(\vx_{n, k}, t_{n, k}, k))}_2\right] \\
        & \le \mathcal{O}\left( L_{\vsigma}^{K-1}\beta_{N-1}^{K}e^{C(L_{\vf^*} + L_{\vg^*}^2)(1+L_{\vpi})TK}\right)\sqrt{K}\sqrt{N\setI_N} 
    \end{align*}
    Here we first applied \Cref{lemma: Per episode regret bound (general sub-Gaussian noise)}. Then we used the monotonicity of $(\beta_n)_{n\ge 0}$ sequence. In the last step we first applied maximum over the collected data, then Cauchy-Schwarz inequality and finally the definition of model complexity. 
\end{proof}
\newpage
\section{Additional Experiments}

\begin{figure}[H]
\centering
\begin{subfigure}[b]{\textwidth}
   \includegraphics[width=1\linewidth]{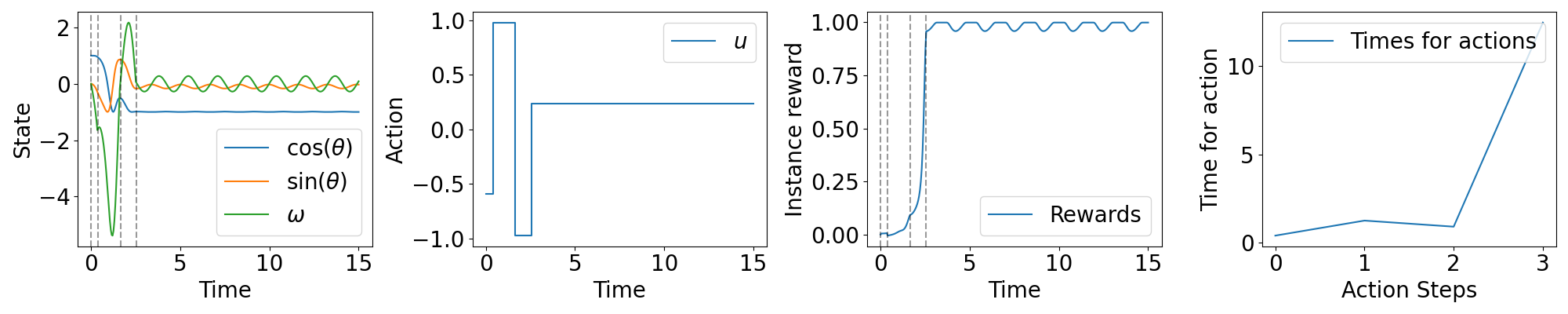}
\end{subfigure}

\begin{subfigure}[b]{\textwidth}
   \includegraphics[width=1\linewidth]{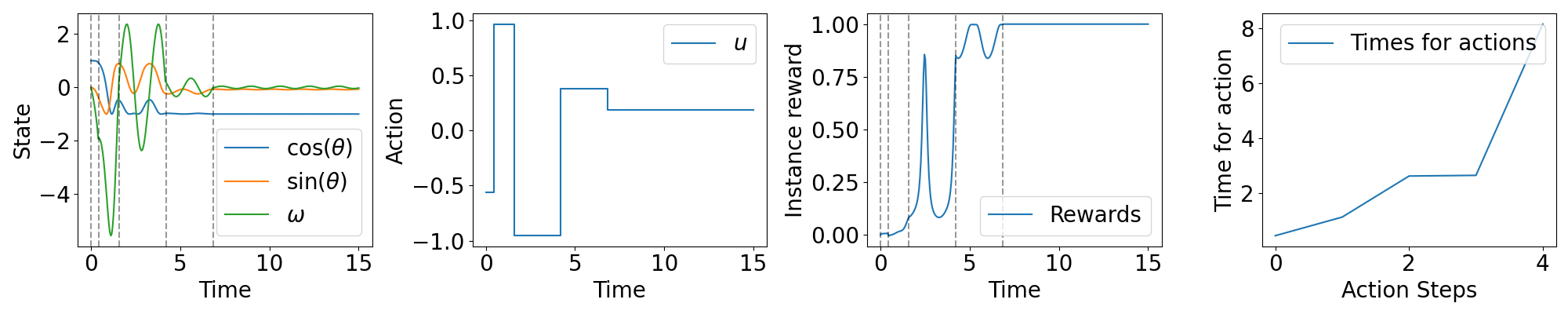}
\end{subfigure}

\begin{subfigure}{\textwidth}
   \includegraphics[width=1\linewidth]{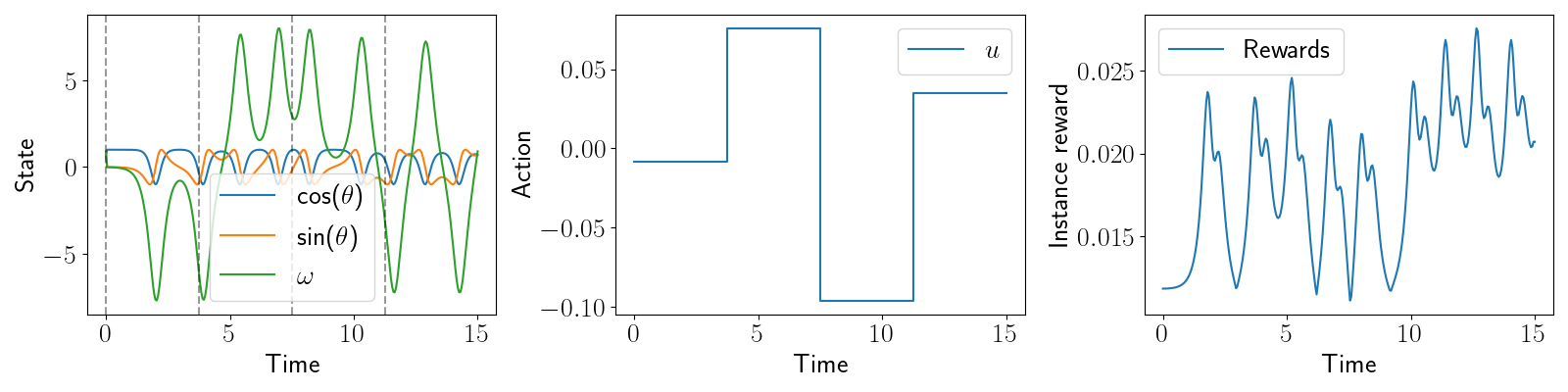}
\end{subfigure}
\begin{subfigure}{\textwidth}
   \includegraphics[width=1\linewidth]{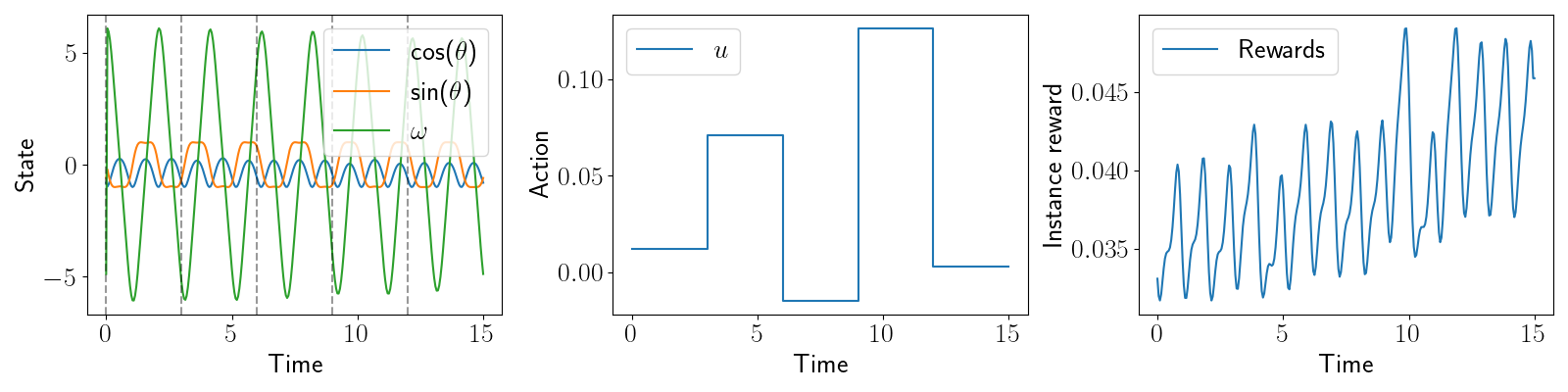}
\end{subfigure}

\caption{Pendulum swing-down task. Row 1: 4 interactions, optimized interaction times, Row 2: 5 interactions, optimized interaction times, Row 3: 4 equidistant interactions, Row 4: 4 equidistant interactions. }
\vspace{7.5cm}
\end{figure}

\begin{figure}[H]
\centering
\begin{subfigure}[b]{\textwidth}
   \includegraphics[width=1\linewidth]{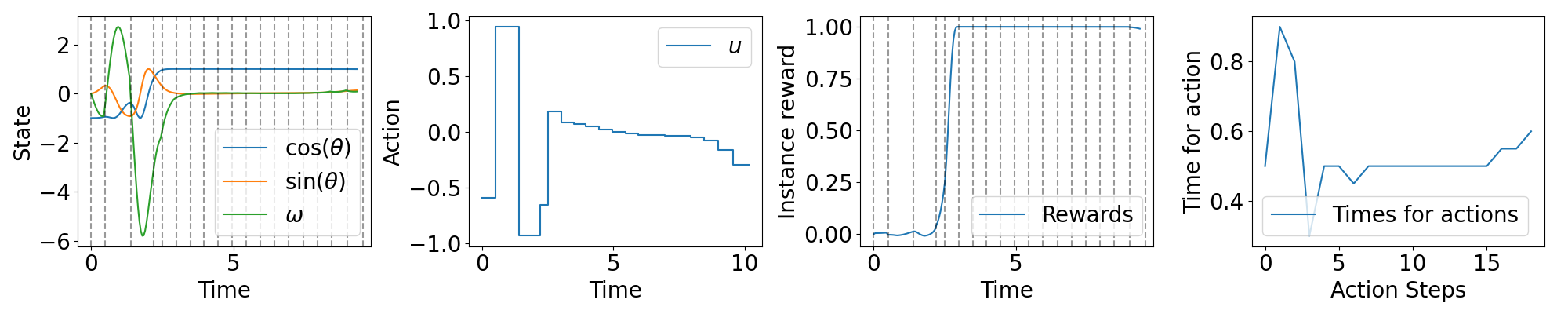}
\end{subfigure}

\begin{subfigure}[b]{\textwidth}
   \includegraphics[width=1\linewidth]{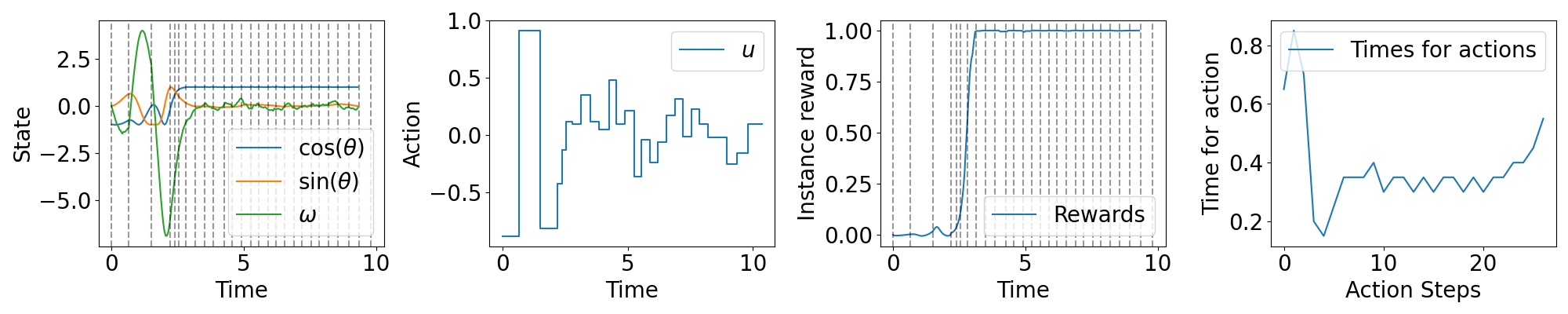}
\end{subfigure}

\begin{subfigure}[b]{\textwidth}
   \includegraphics[width=1\linewidth]{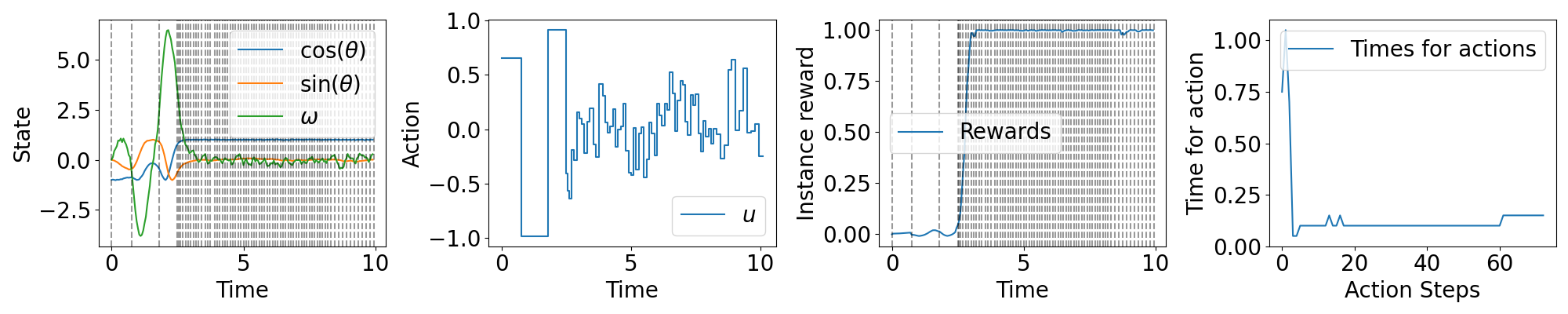}
\end{subfigure}

\caption{When stochasticity of the environments increases, we need more interactions at the unstable equilibrium (Pendulum on top). The stochasticity scale goes from 0.1 to 0.5 to 1.0 from top to bottom row respectively.}
\vspace{5.5cm}
\end{figure}


\end{document}